\documentclass{llncs}

\usepackage{times}
\usepackage{mathtools} 
\usepackage[margin=1.76in]{geometry}  
\usepackage{changepage}
\usepackage{enumitem} 
\usepackage{xcolor}
\usepackage{url} 
\usepackage{todonotes}
\usepackage{amssymb}

\newcommand{\equalTo}{\textsf{Eq}}

\newcommand{\getAttack}{\textsf{Attck}}

\newcommand{\getSupport}{\textsf{Spprt}} 
\newcommand{\BBA}{\mbox{\textsf{BBA}}}  
\newcommand{\F}{\ensuremath{\mathcal{A}^{\small \BBA}}}  
\newcommand{\Fe}[1]{\ensuremath{a_{#1}^{\small \BBA}}} 
 
\newcommand{\ABA}{\mbox{\textsf{ABA}}}
\newcommand{\ASPIC}{\mbox{\textsf{ASPIC}}}

\newcommand{\inL}{\textsf{in}}
\newcommand{\outL}{\textsf{out}}
\newcommand{\undecL}{\textsf{undec}}
  
\newcommand{\Rs}{\ensuremath{R^{\mathbf{s}}}}

\newcommand{\ang}[1]{\ensuremath{\langle #1 \rangle}}

\newcommand{\hide}[1]{}

\newcommand{\andC}{\textsf{and}}

\pagestyle{plain}

\begin{document}
     \title{Block Argumentation} 

\author{} 
\author{Ryuta Arisaka, Stefano Bistarelli \and Francesco Santini}

    \institute{Department of Mathematics and Computer Science,
    University of Perugia, Italy
     \email{ryutaarisaka@gmail.com, firstname.lastname@unipg.it}
}

    \maketitle
\vspace{-0.2cm} 
    
\begin{abstract}   
  \vspace{-0.2cm}
  We contemplate a higher-level bipolar 
  abstract argumentation 
  for 
  non-elementary arguments such as: X argues against 
  Y's sincerity with the fact that 
  Y has presented his argument to draw a conclusion C, 
  by omitting other facts which would not have validated C.  
  Argumentation involving such arguments  
  requires us to potentially consider an argument as a coherent block
  of argumentation, i.e. an argument may itself be 
  an argumentation. 
  In this work, we formulate \emph{block argumentation} 
  as a specific instance of Dung-style bipolar abstract argumentation
  with the dual nature
  of  
  arguments. We consider internal consistency 
  of an argument(ation) under a set of constraints, of graphical (syntactic) and of 
  semantic nature, and  
  formulate acceptability semantics in relation to them.  
  We discover that classical acceptability semantics do not 
  in general hold good with the constraints. In particular,  
  acceptability of unattacked arguments is not always warranted. Further, there may not be 
  a unique minimal member in complete semantics, 
  thus sceptic (grounded) semantics
  may not be its subset. To retain 
  set-theoretically minimal semantics 
  as a subset of complete semantics, 
  we define semi-grounded semantics. 
  Through comparisons, we show how the concept of block argumentation 
  may further generalise structured argumentation. 
  \end{abstract} 

\vspace{-0.9cm}
\section{Introduction}       
In higher-level argumentation \cite{Gabbay09b}, 
or temporal/modal argumentation networks \cite{Barringer12}, an argumentation $F_1$ may be substituted into an argument of a given argumentation $F_2$. 
In the case of higher-level argumentation, the key is to find out which arguments in $F_1$ may be attacking or attacked by other arguments in $F_2$. 
In meta-level argumentation \cite{Modgil11,Villata10}, 
properties of argumentation at object level 
- such as whether an object-level argument 
attacks another object-level argument, or 
the trustworthiness of an arguer -
may be discussed in  an argumentation about the object-level argumentation.

In a realistic argumentation, 
it can of course happen that some agent 
argues about some attack of an argument $a_1$ on an argument $a_2$. However, that 
may not be at meta-level; the agent's argument could be 
in the same argumentation with $a_1$ and $a_2$, interacting with any arguments in the argumentation. Thus, the clear-cut 
distinction between object-level and meta-level argumentations does not always
apply. Further,  
it may be that some argumentation $F$ itself as an argument $a_F$ attacks
an argument, in which case 
it is not the case that we must seek the origin of 
the attack in arguments in $F$, which does not conform to the principle 
of higher-level argumentation. 

Seeing the gap, in this work, we formulate \emph{block argumentation}, where 
an argumentation 
may be an argument and vice-versa, for 
non-elementary arguments 
such as: X argues against 
Y's sincerity with the fact that 
  Y has presented his argument to draw a conclusion C, 
  by omitting other facts which would not have validated C. 
 This  argument is in the form: ``The argument:[the argument:[other facts attack C] attacks the argument:[some facts support C]] attacks Y's sincerity.'', 
indicating an argument may be a block of argumentation. \\
 \indent Having such an argument blurs 
the boundary between an argumentation and an argument, 
leaving the differentiation only up to 
the perspective one employs. 
While block argumentation as we show is a specific instance of 
Dung-style bipolar abstract argumentation (see practical motivation 
for bipolar argumentation in, e.g. \cite{Cayrol11,Arisaka16d,hunter18}) with the dual nature
in arguments, 
it propels us 
to contemplate internal consistency
of an argument. 
For example, one may (or 
may not) find 
\emph{``The conclusion by X in the past that Y was a terrorist 
corroborates X's sinister personality''} factually inconsistent if 
there was no such conclusion by X.\footnote{A possible-world-semantic
observation that some arguments may be unusable was made~\cite{Barringer12}.} 
Ergo, we posit a set of constraints, of graphical (syntactic) and 
semantic nature, which one may choose to impose on a block argumentation, 
and we characterise its acceptability semantics in accordance with them. 
As we are to show, classical Dung acceptability semantics do not 
in general hold good once these constraints are taken into consideration.
In particular, 
unattacked arguments may not be outright acceptable. Further, there may be 
more than one minimal member in complete semantics,
thus the sceptic (grounded) semantics 
may not form its subset; such situation 
has already arisen in the context of 
weighted argumentation \cite{lpnmr16,bistarelli18}. To retain set-theoretically minimal semantics as a subset of complete semantics, we define semi-grounded semantics.

The paper
 has the following structure: in Section~\ref{sec:motivation} we motivate our approach with a real legal example from a popular case. Section~\ref{sec:tech} reports the necessary technical preliminaries. Section~\ref{sec:block} presents block (bipolar) argumentation with formal results (e.g. on the existence of semantics). Finally in Section~\ref{sec:related}, we wrap up the paper with related work, where we discuss 
\ABA-style structured argumentation \cite{Dung09}, noting how the concept 
of block argumentation may further generalise the formalism.

\vspace{-0.2cm}
\section{Motivation for Block Argumentation} \label{sec:motivation} 
During the still ongoing trial
 over the death of Kim Jong-Nam\footnote{\url{https://en.wikipedia.org/wiki/Assassination_of_Kim_Jong-nam}.}, a certain 
argumentation was deployed by a suspect's defence lawyer as 
he cast a blame on Malaysian authorities for 
having released only portions of CCTV footage of the fatal attack.
Broadly: 
\vspace{-0.1cm}
\begin{description} 
   \item[Prosecutor:] the CCTV footage released by Malaysian Police 
     shows a suspect walking quickly to an airport restroom
     to wash hands 
     after attacking the victim with VX, 
      which produces an impression that the suspect,
    contrary to her own statement that she thought she 
    was acting for a prank video,  
      knew
     what was on her hands. 
   \item[Defence Lawyer:] however, the CCTV footage in its entirety shows 
    the suspect adjusting her glasses after the attack, with
    VX on her hands, which counter-evidences 
    her knowledge of the substance. Since Malaysian authorities  
    know of the omitted footage, they are clearly 
    biased against the suspect, 
    intentionally tampering with evidence. 
\end{description} 

\noindent Assume the following arguments (with attacks and supports): 

\begin{description} 
  \item[$a_1$:] After the victim was attacked
      with VX, the suspect walked quickly to a restroom 
  for washing hands.
  \item[$a_2$:] The suspect knew VX was on her hands. 
  \item[$a_3$:] The suspect was 
     acting for a prank video.  
  \item[$a_4$:] The suspect adjusted her glasses 
       with VX on her hands before walking to restroom. 
  \item[$a_5$:] Malaysian authorities are biased against the suspect, 
    tampering with evidence by intentional 
     omission of relevant CCTV footage. 
  \item[$a_6$:] $a_1$ supports $a_2$. \quad $a_7${\bf:} \  $a_4$ attacks $a_2$. \quad $a_8${\bf:} \ $a_7$ attacks $a_6$.
\end{description} 
\begin{center} 
    \includegraphics[scale=0.11]{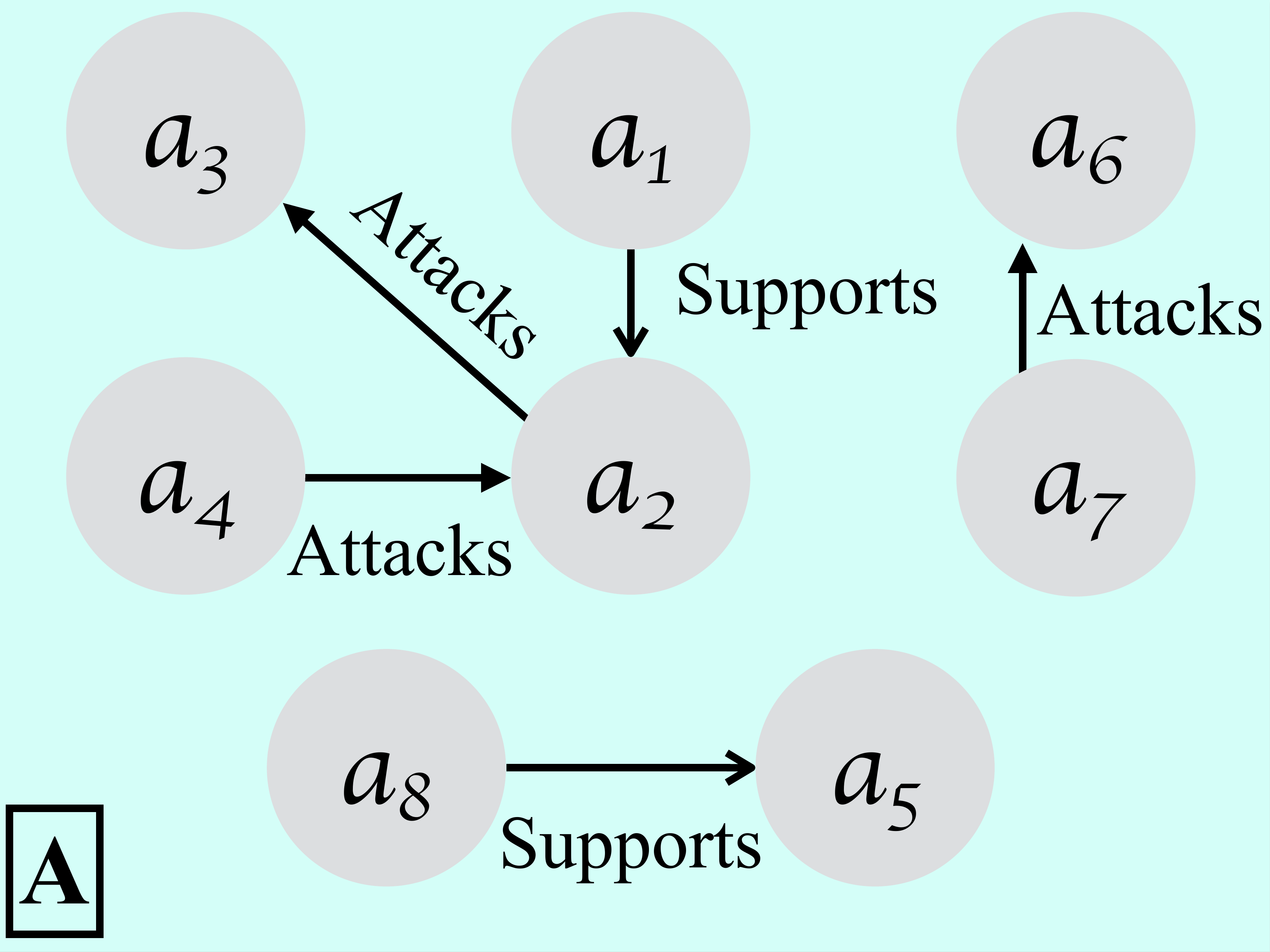}
    \includegraphics[scale=0.11]{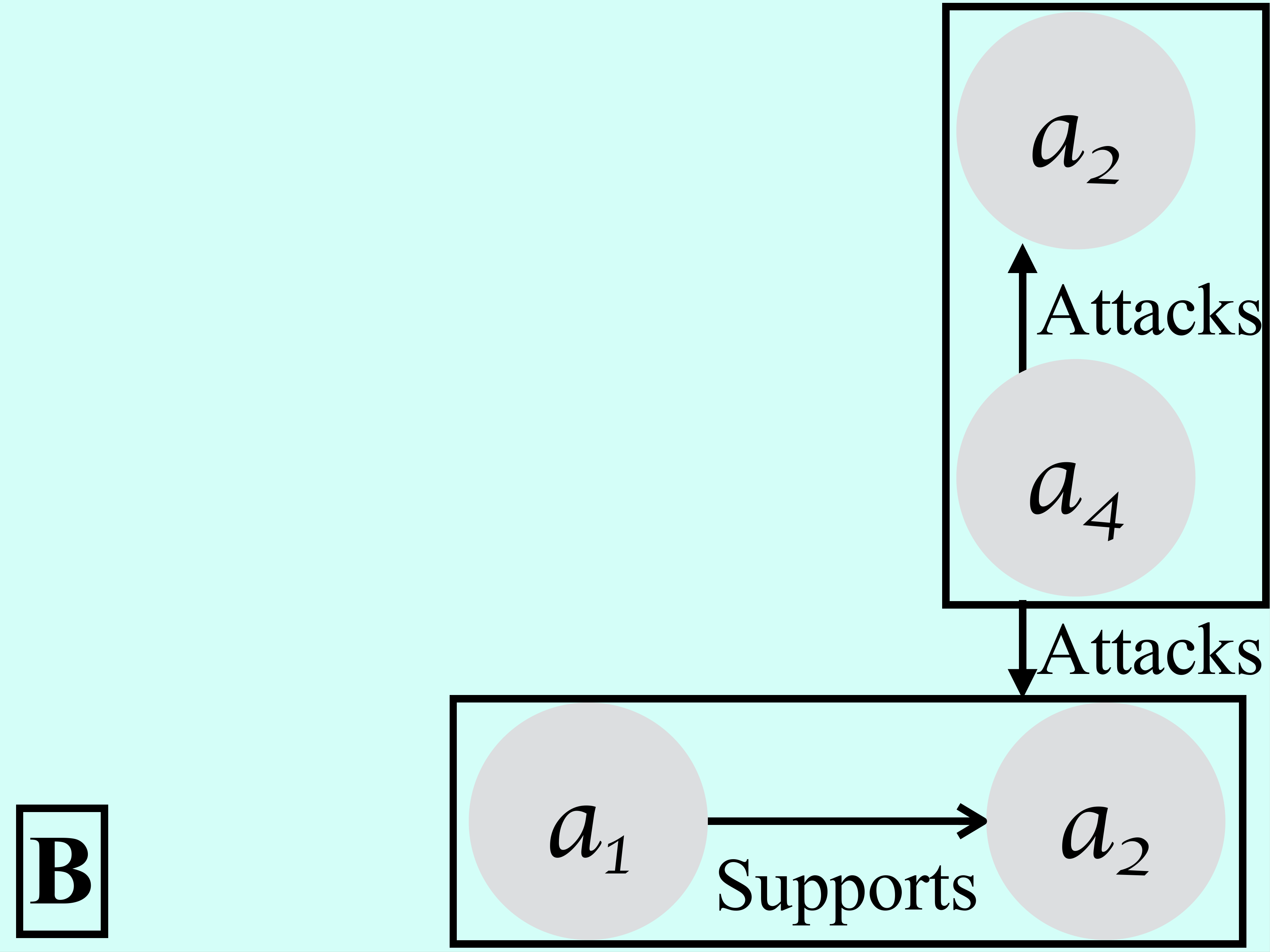}  
   \includegraphics[scale=0.11]{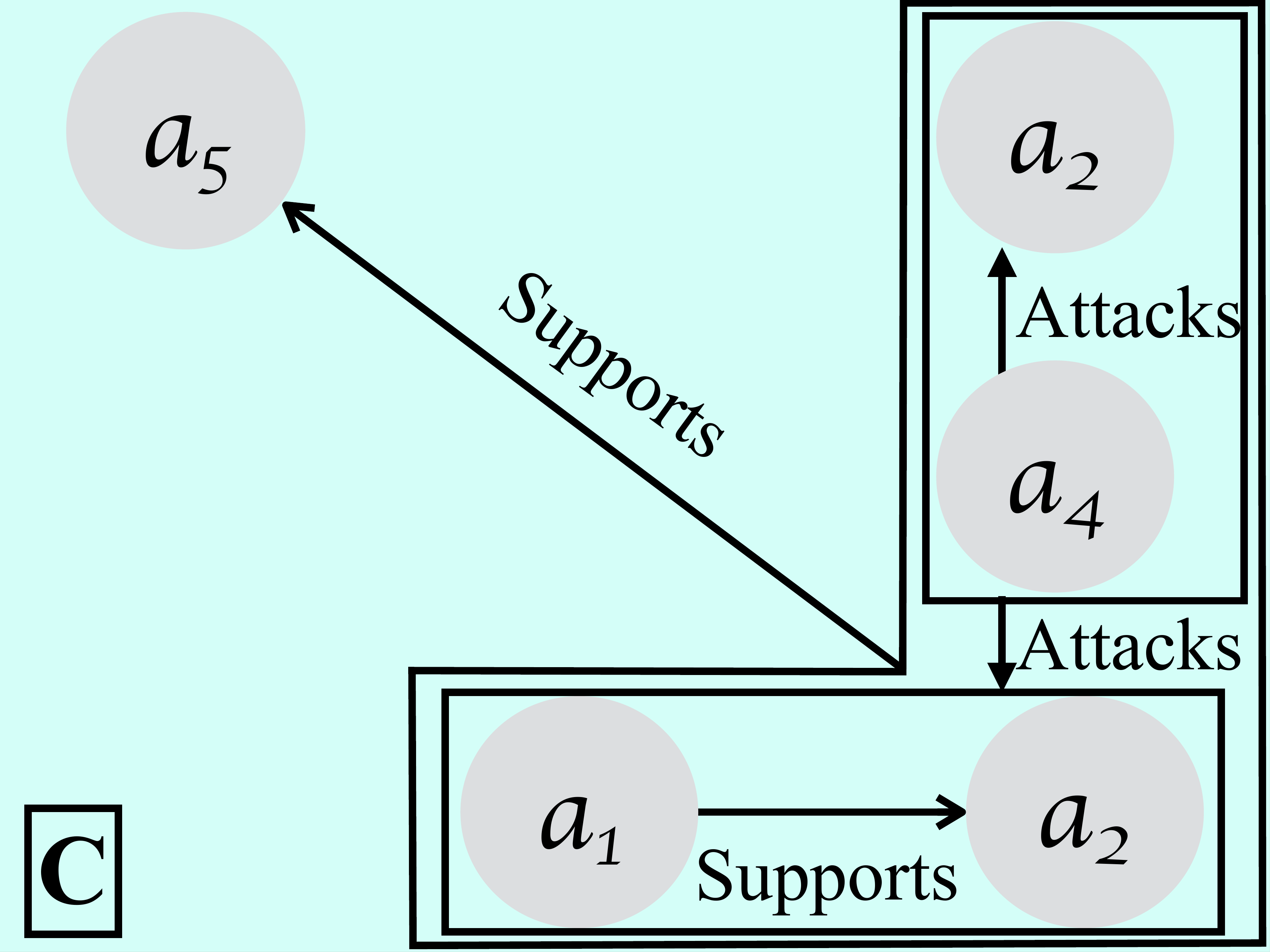}  
\end{center} 
Then we can model the example argumentation as in \fbox{A}. 
Malaysian Police uses $a_1$ 
for $a_2$ ($a_1$ supports $a_2$) to dismiss $a_3$ ($a_2$ attacks $a_3$). All these 
three arguments are made available to the audience. The defence lawyer 
uses $a_4$ to counter $a_2$. $a_4$ is also available to the audience as attacking 
$a_2$. He then 
uses $a_7$, which is itself an argumentation, to attack Malaysian Police' argumentation 
$a_6$. This is also presented to the audience. Finally, 
he uses $a_8$, an argumentation, for $a_5$.



Arguments of the kinds of $a_6$, $a_7$ and $a_8$ 
are themselves argumentations, so 
``$a_7$ attacks $a_6$'' could be detailed as in \fbox{B}, 
and ``$a_8$ supports $a_5$'' as in
\fbox{C}. 
\vspace{-0.4cm}
\subsubsection{Constraints.} These non-elementary arguments
occur in natural language constructs such as ``The fact: [something criticises (supports)
something] is at odds with (in support of) something'', ``The sentiment: [something 
is being defeated by something] enforces something'', and so on, which thus appear 
rather commonly in practice. By recognising these arguments, however, we seem to face 
some challenges to the semantic account given in Dung classic abstract argumentation theory \cite{Dung95}. Assume, for instance, that argumentation as in \fbox{B}: ``$a_7$ attacks $a_6$'', i.e. ``The fact: [the suspect adjusted her glasses with 
VX on her hands before walking to restroom counter-evidences her knowledge of the substance 
on her hands] attacks the argument: [she knew VX was on her hands 
because, after attacking the victim with VX, she walked quickly to a restroom for washing 
hands].'', is given without 
any prior mention of arguments $a_1$, $a_2$ and $a_4$. Then, 
we do not even know what relations may hold between them. Regardless, 
any argument not attacked is always justified in classic theory, according 
to which $a_7$, i.e. that $a_4$ attacks $a_2$, is an acceptable argument, while in the 
first place nothing is known about $a_2$ and $a_4$ that would enable 
us to see $a_4$'s attack on $a_2$ \cite{Slife95}. Essentially, the problem boils down to 
one's interpretation of acceptance of an argument. When we accept an argument, are we only accepting 
the fact that it has resisted refutations, or are we also accepting the information in the 
argument?

On many occasions, acceptance of an argument means the latter, for which the classic prediction could appear hasty and possibly unsafe, more so in block 
argumentation, for an argument can be explicitly seen appearing multiple times, and one may like 
to impose certain constraints to enforce argument's dependency on the same argument 
that occurs in a given argumentation. For \fbox{B}, the constraint to be imposed 
could be 
the presence of the argumentation involving $a_1, a_2$ and $a_4$ outside the blocks, 
as in \fbox{A}. With it, it is immediate whether the argumentation in 
the argument $a_6$ or $a_7$ actually refers to a part of an already known argumentation. 
We envision both graphical (syntactic) and semantic constraints in this paper (more details are provided in Section~\ref{sec:block}).

\vspace{-0.3cm} 

\section{Technical Preliminaries}\label{sec:tech}  
\vspace{-0.1cm}
Let $\mathcal{A}$ be a class of abstract entities we understand 
as arguments. We refer to a member of $\mathcal{A}$ 
by $a$ with or without a subscript and/or a superscript, 
and a finite subset of $\mathcal{A}$ by $A$ with or without 
a subscript.  
A bipolar argumentation framework (e.g. see \cite{Cayrol11})
is a tuple $(A, R, \Rs)$ with two binary relations 
$R$ and $\Rs$ over $A$. For any $(A, R, \Rs)$, 
$A_1 \subseteq A$ is said to attack, or support, $A_2 \subseteq A$ 
if and only if, or iff, there exist $a_1 \in A_1$ and 
$a_2 \in A_2$ such that 
$(a_1, a_2) \in R$ (for attack), or $(a_1, a_2) \in \Rs$ 
(for support). \\  
\indent An extension-based acceptability semantics 
of $(A, R, \Rs)$ is a family of $A$ (i.e. 
a subset of power set of $A$). When $\Rs$ is not taken into account, 
a semantics of $(A, R, \Rs)$ is effectively that of $(A, R)$, 
a Dung abstract argumentation framework \cite{Dung95}. In Dung's, 
$A_1 \subseteq A$ is said to defend $a_x \in A$ 
iff each $a_y \in A$ attacking $a_x$ 
is attacked by at least one member of $A_1$, 
and said to be conflict-free
iff $A_1$ does not attack $A_1$. $A_1 \subseteq A$ 
is said to be: admissible iff it is conflict-free and defended; 
complete iff it is admissible and includes all arguments 
it defends; preferred iff it is a maximal 
complete set; and grounded iff it is the set intersection of 
all complete sets. 
Complete / preferred / grounded semantics is the set of 
all complete / preferred / grounded sets. The grounded set 
defined as the set intersection may not be a complete set \cite{lpnmr16} in non-classic setting; it, 
however, reflects the attitude of sceptic acceptance, the whole point of the grounded 
semantics, which would not be necessarily fulfilled if the grounded set were defined 
the least complete set. 

A label-based acceptability semantics \cite{Caminada06} for Dung's $(A, R)$ makes use of 
the set $\mathcal{L}$ of three elements, say $\{+, -, ?\}$, and the class $\Lambda$ of 
all functions from $\mathcal{A}$ to $\mathcal{L}$. While, normally, 
it is $\{\inL, \outL, \undecL\}$, by $\{+, -, ?\}$ we avoid direct 
acceptability readings off them. $\lambda \in \Lambda$ is said to be a complete 
labelling of $(A, R)$ iff: 
\vspace{-0.2cm}
\begin{itemize}
    \item $\lambda(a) = +$ iff $\lambda(a_p) = -$ for every $a_p \in A$ that attacks $a$.
    \item $\lambda(a) = -$ iff there is some $a_p \in A$ with $\lambda(a_p) = +$ 
     that attacks $a$. 
\end{itemize} 
\vspace{-0.2cm}
$A_1 \subseteq A$ is the set of all 
arguments that map into $+$ under complete labelling iff $A_1$ is a complete set, thus 
a label-based semantics provides the same information as an extension-based 
semantics does, and more because it classifies the remaining arguments 
in $-$ and ?. \\
\indent In bipolar argumentation where $\Rs$ properly matters 
for an acceptability semantics, 
the notion of support is given a particular interpretation  
which influences the semantics.\footnote{See \cite{Cayrol11} for a survey 
of three 
popular types: deductive support \cite{Boella10} (for any member of 
a semantics, if $(a_1, a_2) \in \Rs$ and if $a_1$ is 
in the member, then $a_2$ is also in the member); evidential support \cite{Oren10,Arisaka16d} (for any member of a semantics, 
if $a$ is in the member, then it can be traced back through $\Rs$ to some 
indisputable arguments in the member) and necessary support \cite{Nouioua10,Nouioua11} (for any member of a semantics, 
if $(a_1, a_2) \in \Rs$ and if $a_2$ is in the member, 
then so must $a_1$ also be).} $\Rs$ can 
be used also for the purpose of expressing premise-conclusion relation 
in structured argumentation \cite{Modgil13,Dung09} among arguments, which will 
be separately discussed in Section~\ref{sec:related}.


\section{Block (Bipolar) Argumentation}\label{sec:block} 
Let $\mathbb{N}$ be the class of natural numbers including 0. 
We refer to its member by $n$ with or without a subscript.
Let $\mathcal{X}$ be a class of an uncountable number of abstract entities. 
We refer to a member of $\mathcal{X}$ by $x$ with or without 
a subscript. 
It will be assumed that every member of $\mathcal{X}$ is
distinguishable from any other members. Further, every member of $\mathcal{X}$ has no
intersection with any others. Lack of these assumptions is not convenient if 
one wants to know equality of two arguments for graphical and semantic constraints.\footnote{Alternatively, we can assume a domain of discourse 
and interpretation as customary in formal logic; see \cite{Kleene52} 
or any other standard text for the foundation of mathematical logic.} 
\begin{definition}[Arguments and argumentations] 
We define a (block) argument $a \in \mathcal{A}$ 
to be either $(\{x\}, \emptyset, \emptyset)$ for some $x \in 
\mathcal{X}$ 
or $(\{a_1, \ldots, a_n\}, R_p,$ $\Rs_p)$ 
for some $a_1, \ldots, a_n \in \mathcal{A}$ and 
some binary relations $R_p$ and $\Rs_p$ 
over $\{a_1, \ldots, a_n\}$.  
We say $a \in \mathcal{A}$ is
unitary iff 
    $a$ is some $(\{x\}, \emptyset, \emptyset)$. \\
\indent We define a Block (Bipolar) Argumentation (\BBA)
to be some argument $(A, R, \Rs)\linebreak \in \mathcal{A}$. 
We say that it is finite iff 
the number of occurrences of symbols is finite in 
$A$.\footnote{
As in set theory (see \cite{Jech2000}), this is not equivalent 
to finiteness 
in the number of members of $A$.} 
We denote 
the class of all finite {\BBA}s by $\F$, a subclass of $\mathcal{A}$,
 and refer to its member by $\Fe{}$ with or without 
a subscript.   

\end{definition} 
\begin{example}[$\BBA$ argumentation] Denote the argumentation \fbox{A} in Section 2 by $\Fe{}$. 
\vspace{-0.2cm}
  \begin{itemize} 
    \item[] $\Fe{} = (\{a_1, \ldots, a_8\}, \{(a_4, a_2), (a_2, a_3), (a_7, a_6)\},
    \{(a_1, a_2), (a_8, a_5)\})$.
    \item[] $a_i = (\{x_i\}, \emptyset, \emptyset)$ for $1 \leq i \leq 5$. \hspace{0.35cm} 
     $a_6 = (\{a_1, a_2\}, \emptyset, \{(a_1, a_2)\})$.
    \item[] $a_7 = (\{a_2, a_4\}, \{(a_4, a_2)\}, \emptyset)$. \hspace{0.5cm}
    $a_8 = (\{a_6, a_7\}, \{(a_7, a_6)\}, \emptyset)$.
  \end{itemize}

\end{example} 

\subsection{Representation of Argument(ation)s}


To refer to arguments in a specific position in $\Fe{}$, we make 
use of: 
\begin{definition}[Flat representation] 
  Let $\langle\mathbb{N}\rangle$ denote the class of all sequences 
  of natural numbers (an empty sequence is included), whose member is referred to by 
  $\langle m \rangle$ with or without a subscript, 
  or by a specific sequence of natural numbers. We use `.'
  for sequence concatenation. 
  Let $\varpi: \mathcal{A}  \rightarrow 2^{\mathcal{A} \times 
   \langle \mathbb{N} \rangle}$ be such that 
  $\varpi(a)$ is the least set   
  that satisfies all the following. 
   \begin{enumerate}    
      \item $(a, 0) \in \varpi(a)$. 
      \item For any $(a_p, \langle m \rangle) \in 
       \varpi(a)$, 
        if $a_p$ is not unitary and is some $\{a_1, \ldots, a_{n_p}\}$, then 
        $(a_i, \langle m \rangle.n_i) \in \varpi(a)$ 
        for every $1 \leq i \leq {n_p}$ such that all 
        $n_1, \ldots, n_{n_p}$ are distinct.
   \end{enumerate}  
   For any $a$ and any $n$, we say that $\varpi(a)$ 
   is its flat representation. 
\end{definition}

\begin{example}[Flat representation]\label{ex_2} ({\it Continued}) 
For the same example argumentation $\Fe{}$ in \fbox{A}, we can 
   define its flat representation $\varpi(\Fe{})$ 
   to be the set of all the following. Just for 
   disambiguation,  we demarcate the constituents in a sequence of natural numbers with `.'. 
  \begin{itemize} 
     \item[] $(\Fe{}, 0)$. \hspace{0.6cm} 
      $(a_{i}, 0.{i})$ for $1 \leq i \leq 8$. \hspace{0.6cm} 
      $(a_{1}, 0.6.{1})$ and $(a_{2}, 0.6.{2})$.
     \item[] $(a_{2}, 0.7.{1})$ and $(a_{4}, 0.7.2)$. \hspace{3.1cm} 
      $(a_{6}, 0.8.{1})$ and $(a_{7}, 0.8.{2})$.
     \item[] $(a_{1}, 0.8.1.1)$ and $(a_{2}, 0.8.1.2)$. \hspace{2.2cm}
      $(a_2, 0.8.2.1)$ and $(a_4, 0.8.2.2)$.
  \end{itemize}  
\end{example}

\begin{definition}[Order in flat representation] 
    Let $\sqsupseteq: ({\mathcal{A}} \times \langle \mathbb{N}
   \rangle) \times ({\mathcal{A}} \times \langle \mathbb{N}
   \rangle)$ be such that $((a_p, \langle m_p \rangle),
   (a_q, \langle m_q \rangle)) \in\ \sqsupseteq$,
   synonymously written as $(a_p, \langle m_p \rangle) 
   \sqsupseteq (a_q, \langle m_q \rangle)$, iff  
   $\langle m_p \rangle.\langle m_r \rangle
 = \langle m_q \rangle$ for 
   some $\langle m_r \rangle$. We write 
   $(a_p, \langle m_p \rangle) \sqsupset (a_q, \langle m_q 
    \rangle)$ iff 
   $(a_p, \langle m_p \rangle) \sqsupseteq (a_q, \langle 
      m_q \rangle)$ $\andC$ $(a_q, \langle m_q \rangle) \not\sqsupseteq (a_p, \langle 
      m_p \rangle)$.\footnote{``$\andC$" instead of ``and" is used in this paper when the context in which 
      it appears strongly indicates truth-value comparisons. It follows the semantics 
      of classical logic conjunction.}
\end{definition} 
\subsection{Characterisation of Complete Sets with No Constraints}
We characterise complete sets with no constraints initially. 

\begin{definition}[Arguments, attacks and supports in $\langle m \rangle$]  
Let $\textsf{Arg}: \F \times \langle \mathbb{N} \rangle \rightarrow 
2^{\mathcal{A}}$ be such that 
$\textsf{Arg}(\Fe{}, \langle m \rangle) = A_p$ iff 
$((A_p, R, \Rs), \langle m \rangle) \in \varpi(\Fe{})$ for some 
$R$ and $\Rs$.  Let $\getAttack, \getSupport:
\F \times \langle \mathbb{N} \rangle 
\rightarrow 2^{\mathcal{A} \times \mathcal{A}}$ 
be such that:
\vspace{-0.2cm} 
\begin{itemize}
    \item[]  {\small$\getAttack(\Fe{}, \langle m \rangle) = 
        \{(a_1, a_2) \in R \ | \ \exists ((A, R, \Rs), \langle m \rangle) \in \varpi(\Fe{}).\ a_1, a_2 \in A\}$}
    \item[]  {\small$\getSupport(\Fe{}, \langle m \rangle) = \{(a_1, a_2) \in \Rs \ | \ 
            \exists ((A, R, \Rs), \langle m \rangle) \in \varpi(\Fe{}).\ a_1,
             a_2 \in A\}$}
\end{itemize}   
\vspace{-0.2cm} 
\noindent For any $\Fe{}$ and any $\langle m \rangle$, 
we say: 
\vspace{-0.2cm} 
\begin{itemize}
    \item[] $A_p$ is the set of arguments in $\langle m \rangle$ 
        iff $A_p = \textsf{Arg}(\Fe{}, \langle m \rangle)$.
	\item[]  $a_1$ attacks $a_2$ in $\langle m \rangle$ 
             iff $(a_1, a_2) \in \getAttack(\Fe{}, \langle m \rangle)$. 
	\item[] $a_1$ supports $a_2$ in $\langle m \rangle$ 
    	      iff $(a_1, a_2) \in \getSupport(\Fe{}, \langle m \rangle)$.  
\end{itemize} 
\end{definition}   
While there are three typical interpretations (deductive, necessary, evidential)
of support in the literature, they enforce a strong dependency 
between arguments and the arguments that support them concerning 
their acceptance. In light of the example in Section~\ref{sec:motivation}, 
our interpretation of support here is weaker, almost supplementary, as 
in the following definitions. Briefly, it is not necessary that an argument be in 
a complete set when its supporter/supportee is in the set. A supporter can, however, prevent an argument attacked by an attacker from being strongly 
rejected (with labels, it concerns the difference of whether the argument gets $-$ (which leads to strong 
rejection) or ?). 
We look at extension-based complete set characterisation first. 
\begin{definition}[Extension-based complete set when no constraints]
   For any $\Fe{}$ and any $\langle m \rangle$, we say:  
   $A_1$ defends $a$ in $\langle m \rangle$ iff 
   $A_1 \subseteq \textsf{Arg}(\Fe{}, \langle m \rangle)$ $\andC$
   $a \in \textsf{Arg}(\Fe{}, \langle m \rangle)$ $\andC$ 
   every $a_1$ attacking $a$ in $\langle m \rangle$ is: 
  attacked by at least some $a_2 \in A_1$; and  
    not supported by any $a_3 \in A_1$. \\
    \indent We say that 
    $A_1$ is conflict-free in $\langle m \rangle$ iff 
    $A_1 \subseteq \textsf{Arg}(\Fe{}, \langle m \rangle)$ $\andC$
    $(a_1, a_2) \not\in \getAttack(\Fe{}, \langle m \rangle)$
    for any $a_1, a_2 \in A_1$. \\
    \indent We say that $A_1$ is standard complete in $\langle m \rangle$ iff 
    $A_1 \subseteq \textsf{Arg}(\Fe{}, \langle m \rangle)$ $\andC$
    $A_1$ is conflict-free and 
    includes all arguments it defends in $\langle m \rangle$.
\end{definition}
\begin{example}[Complete set with no constraints]{\it (Continued)} For $\Fe{}$ in \fbox{A}, we have 
only one standard complete set in 0 with this characterisation, namely 
$\{a_1, a_4, a_5, a_7, a_8\}$. To explain the role of a supporter to prevent 
strong rejection of an argument, notice that $a_1$, which is in the standard complete set 
in 0, supports $a_2$, which is attacked by $a_4$ in the same standard complete set in 0. 
If it were not for the supporter, $a_3$ would be in the standard complete set in 0.
\end{example}
We can also have a label-based characterisation with  
$\mathcal{L} (= \{+,-,?\})$.
\begin{definition}[Complete labelling when no constraints] 

 Let $\Lambda$ be the class of 
    all  $\lambda: \mathcal{A} \times 
      \langle \mathbb{N} \rangle \rightarrow 
        \mathcal{L}$
             such that 
    $\lambda((a, \langle m \rangle)) = l$ for some 
    $l$. 
   For any $\Fe{}$ and any $\lambda \in \Lambda$, we say that 
   $\lambda$ is a standard complete labelling of $\Fe{}$ iff
   every $(\{a_1, \ldots, a_n\}, \langle m \rangle) \in \varpi(\Fe{})$ 
   satisfies
    all the following. 
   \begin{itemize}[leftmargin=0.3cm]
       \item $\lambda((a_i, \langle m \rangle.n_i)) = +$, 
          $1 \leq i \leq n$, 
           iff  every $a_j$, $1 \leq j \leq n$, attacking $a_i$ in $\langle m \rangle$
          satisfies $\lambda((a_j, \langle m \rangle.n_j)) = -$.
        \item $\lambda((a_i, \langle m \rangle.n_i)) = -$, 
          $1 \leq i \leq n$,  iff 
               there exists some $1 \leq j \leq n$ such that 
               $\lambda((a_j, \langle m \rangle.n_j))\\ = +$ 
               and that $a_j$ attacks $a_i$ in $\langle m \rangle$
               $\andC$ there is no $1 \leq k \leq n$ such that 
               $\lambda((a_k, \langle m \rangle.n_k)) = +$ 
               and that $a_k$ supports $a_i$ in $\langle m \rangle$.
   \end{itemize}
\end{definition}
\begin{theorem}[Correspondence between standard complete sets and standard complete labellings]
   For any $\Fe{}$ and any $\langle m \rangle$, 
   $A_1 \subseteq \textsf{Arg}(\Fe{}, \langle m \rangle)$ is standard complete in $\langle m \rangle$
   only if there is some standard complete labelling $\lambda$ of $\Fe{}$ 
   such that
   $\lambda((a_p, \langle m \rangle.n)) = +$  is equivalent to 
   $a_p \in A_1$ for any $(a_p, \langle m \rangle.n) \in  \varpi(\Fe{})$. Conversely, $\lambda$ is a standard complete labelling of $\Fe{}$ 
   only if, for every $(a, \langle m \rangle) \in \varpi(\Fe{})$,
  {\small $\{a_p \in \textsf{Arg}(\Fe{}, \langle m \rangle) \ | \linebreak
  \exists n \in \mathbb{N}.\lambda((a_p, \langle m \rangle.n)) = +\}$} is a standard complete set in $\langle m \rangle$.  
\end{theorem}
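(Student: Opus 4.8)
The plan is to reduce the statement, which is a Caminada-style correspondence between the extension- and label-based complete semantics lifted to the nested ``block'' structure, to the ordinary single-framework correspondence, via one structural observation: the two bullet conditions that a standard complete labelling of $\Fe{}$ imposes at a given $(\{a_1,\ldots,a_n\},\langle m'\rangle)\in\varpi(\Fe{})$ mention $\lambda$ only at the immediate child positions $\langle m'\rangle.n_1,\ldots,\langle m'\rangle.n_n$, and, since the slots $n_i$ are distinct and $\langle m'\rangle.n_i=\langle m''\rangle.n_j$ forces $\langle m'\rangle=\langle m''\rangle$, the child-position sets of distinct parents are pairwise disjoint, their union being exactly the set of non-root positions of $\varpi(\Fe{})$. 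Hence $\lambda$ is a standard complete labelling of $\Fe{}$ iff, at each parent position $\langle m'\rangle$ in $\varpi(\Fe{})$, the restriction of $\lambda$ to those child positions is a ``complete labelling'', in the ordinary un-nested sense of the two bullets, of the local bipolar framework $(\textsf{Arg}(\Fe{},\langle m'\rangle),\getAttack(\Fe{},\langle m'\rangle),\getSupport(\Fe{},\langle m'\rangle))$ --- and these choices over distinct parents are independent.

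I would then prove that single-framework correspondence in two halves. \textbf{(a)} Given $A_1$ standard complete in $\langle m\rangle$, the labelling that sends each child position $(a_i,\langle m\rangle.n_i)$ to $+$ when $a_i\in A_1$, to $-$ when $a_i\notin A_1$ and $a_i$ is attacked but not supported by $A_1$ in $\langle m\rangle$, and to $?$ otherwise, satisfies both bullets: for ``$+$'', conflict-freeness keeps attackers of a member of $A_1$ out of $A_1$ and admissibility then makes every such attacker $-$, while conversely, if all attackers of $a_i$ are $-$ then unfolding the definition shows $A_1$ defends $a_i$, so $a_i\in A_1$ by reinstatement; the ``$-$'' bullet then follows because under this labelling ``$+$'' coincides with ``$\in A_1$'', plus conflict-freeness. \textbf{(b)} Conversely, the set $E$ of $+$-labelled arguments of a local complete labelling is standard complete: if $a_i\in E$ attacked $a_j\in E$ the ``$+$'' bullet at $a_j$ would force $a_i$ to be $-$; for $a_i\in E$, the ``$+$'' bullet makes each attacker $-$ and the ``$-$'' bullet then supplies for each a $+$-attacker (hence a member of $E$) and forbids a $+$-supporter (hence any $E$-supporter), so $E$ defends $a_i$; and if $E$ defends some $a_i$, the ``$-$'' bullet puts every attacker of $a_i$ at $-$, so the ``$+$'' bullet puts $a_i$ at $+$. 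Here I read ``standard complete'' in the admissibility-inclusive sense of the Dung-style definition recalled earlier (conflict-free, defending each member, including everything it defends); both the ``defends'' and the ``includes'' parts are genuinely used in (a). These are the familiar Caminada arguments with the ``not supported by a $+$'' conjunct threaded through.

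Gluing then gives both directions. For the second ``only if'': if $\lambda$ is a standard complete labelling of $\Fe{}$, then by the decomposition its restriction to the children of any $(a,\langle m\rangle)\in\varpi(\Fe{})$ is a local complete labelling whose $+$-set is precisely $\{a_p\in\textsf{Arg}(\Fe{},\langle m\rangle)\mid\exists n\in\mathbb{N}.\,\lambda((a_p,\langle m\rangle.n))=+\}$, which is standard complete in $\langle m\rangle$ by (b); for unitary $a$ there are no child positions and the claim is vacuous. For the first ``only if'': given $A_1$ standard complete in $\langle m\rangle$, take the labelling of (a) at the $\langle m\rangle$-framework, take some local complete labelling at every other non-unitary parent position of $\varpi(\Fe{})$, and $?$ everywhere else; by the decomposition the result is a standard complete labelling of $\Fe{}$, and by construction $\lambda((a_p,\langle m\rangle.n))=+$ is equivalent to $a_p\in A_1$ exactly for the $(a_p,\langle m\rangle.n)\in\varpi(\Fe{})$.

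The step I expect to be the real obstacle is the phrase ``take some local complete labelling at every other parent position'': direction 1 produces a \emph{total} labelling, so it needs every finite bipolar framework to admit at least one complete labelling (equivalently, a standard complete set). Unlike Dung's setting this is not the usual least-fixpoint argument that yields the grounded labelling --- the weak-support ``no $+$-supporter'' clause makes the characteristic defence operator non-monotone (a supporter turning $+$ can pull its supportee out of $-$, which can pull something it attacks out of $+$) --- and without extra hypotheses existence can genuinely fail, e.g.\ the three-argument bipolar framework with attacks $e\to a$, $u\to e$ and support $a\to e$ has no complete labelling at all. So this ingredient has to be imported from a separate existence result, and the statement should be read accordingly; the remaining bookkeeping --- the sequence-prefix disjointness argument and matching the slot indices $n_i$ with the ``$\exists n$'' of the statement --- is routine.
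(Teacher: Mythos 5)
Your local argument is essentially the paper's own: the paper likewise builds $\lambda$ from a standard complete set $A_1$ by assigning $+$ to members of $A_1$, $-$ to certain attacked arguments and $?$ to the rest, and conversely extracts the $+$-set and checks conflict-freeness, defence and reinstatement. Two of your refinements are genuine improvements over the printed proof. First, your $-$-assignment (attacked \emph{and not supported} by $A_1$) is more careful than the paper's (every argument attacked by $A_1$ gets $-$), whose justification that such an argument ``is not supported by any member of $A_1$ because $A_1$ is defended'' only covers attackers of members of $A_1$, not arbitrary arguments attacked by $A_1$. Second, your explicit decomposition into independent local labelling problems exposes something the paper's proof silently skips: its constructed $\lambda$ is only specified at positions $\ang{m}.n$, yet a standard complete labelling must satisfy the two bullets at \emph{every} $(\{a_1,\ldots,a_n\},\ang{m'})\in\varpi(\Fe{})$. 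Your identification of the resulting obstacle is correct, and your three-argument example (attacks $e\to a$, $u\to e$, support $a\to e$) really does admit no standard complete labelling: $u$ is forced to $+$, so $e\ne +$, so $a\ne -$; $a=+$ forces $e=-$ by the $+$-clause at $a$ but forbids $e=-$ by the no-$+$-supporter conjunct, and $a=?$ forces $e=-$ and hence $a=+$. So the first direction of the theorem cannot be completed without an existence hypothesis that the paper never supplies.

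One point where your diagnosis is, if anything, too charitable: the defect is not confined to ``other parent positions''. Under the paper's literal Definition of \emph{standard complete in $\ang{m}$} --- conflict-free and including all arguments it defends, with \emph{no} admissibility clause --- the set $\{u,a\}$ in your example is standard complete (it defends only $u$, since $a\in\{u,a\}$ supports the attacker $e$ and so blocks the defence of $a$), yet no standard complete labelling exists at all; hence direction 1 already fails at $\ang{m}$ itself. Your admissibility-inclusive reading of ``standard complete'' is exactly what rescues your step (a), and under that reading steps (a) and (b) are sound Caminada-style arguments with the support conjunct threaded through. In short: your proof is the paper's proof done properly, and the residual gap you flag is a real gap in the theorem as stated (and in the paper's proof, which never addresses positions other than $\ang{m}$), not a gap in your argument; it would need to be closed either by adding an existence assumption or by restricting to frameworks where no supporter of an argument is reachable as in your example.
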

On the basis of this correspondence, we will work mainly with labels, as they 
simplify referrals of arguments that do not get $+$. 

\subsection{Graphical (Syntactic) Constraints} 
\vspace{-0.1cm}
We now characterise constraints, which can be graphical (syntactic) or 
semantic. The former enforces that any argumentation within an argument(ation) has already occurred, while the latter enforces 
       that 
    acceptability statuses of  
    arguments in any argumentation respect in a certain way those 
    of the same arguments that have already occurred. 
We begin with a graphical one, for which 
we need to be able to tell equality of two arguments.
\begin{definition}[Argument equality] 
  Let $\equalTo: \mathcal{A} \times \mathcal{A}$ 
  be such that $\equalTo(a_1, a_2)$ iff one of the 
     following conditions is satisfied. 
     \begin{itemize}[leftmargin=0.3cm]
         \item $a_1 = a_2 = (\{x\}, \emptyset, \emptyset)$ 
            for some $x \in \mathcal{X}$.
         \item If $a_1$ is some $(\{a''_{1}, \ldots, a''_{n}\}, R_1, \Rs_1)$,
            then $a_2$ is some $(\{a'_{1}, \ldots, a'_{n}\}, R_2, \Rs_2)$
            such that: 
              \begin{itemize}
                  \item 
                    $\equalTo(a''_{i}, a'_{i})$;
                    \item 
               $(a''_{i}, a''_{j}) \in R_1$ iff 
               $(a'_{i}, a'_{j}) \in R_2$;
               \item and $(a''_{i}, a''_{j}) \in \Rs_1$ iff 
               $(a'_{i}, a'_{j}) \in \Rs_2$ 
           \end{itemize}
           for $1 \leq i, j \leq n$.
     \end{itemize}
\end{definition} 
\begin{definition}[Sub-argumentation]{\ }
    Let $\searrow: \mathcal{A} \times \mathcal{A}$ be such that
$((A_1, R_1, \Rs_1), (A_2,$ $R_2,\Rs_2)) \in \searrow$, 
written synonymously as $(A_1, R_1, \Rs_1) \searrow 
(A_2, R_2, \Rs_2)$ iff $A_2 = \{a'_{1}, \ldots,$  $a'_{n_2}\}$ 
and $A_1 = \{a_{1}, \ldots, a_{n_1}\}$ are such that: 
\vspace{-0.1cm}
\begin{itemize}[leftmargin=0.3cm]
    \item $|A_2| \leq |A_1|$. 
    \item For each $a'_i$, $1 \leq i \leq n_2$, there exists 
        some $a_j$, $1 \leq j \leq n_1$, such  that 
        $\equalTo(a'_i, a_j)$.
    \item For each two $a'_{i_1}, a'_{i_2}$, $1 \leq i_1, i_2 \leq n_2$, if 
     $(a'_{i_1}, a'_{i_2}) \in R_2$, then there exist
     some $a_{j_1}, a_{j_2}$, $1 \leq j_1, j_2 \leq n_1$ such that 
     $(a_{j_1}, a_{j_2}) \in R_1$, that 
     $\equalTo(a'_{i_1}, a_{j_1})$, and that 
     $\equalTo(a'_{i_2}, a_{j_2})$.
     \item For each two $a'_{i_1}, a'_{i_2}$, $1 \leq i_1, i_2 \leq n_2$, if 
     $(a'_{i_1}, a'_{i_2}) \in \Rs_2$, then there exist
     some $a_{j_1}, a_{j_2}$, $1 \leq j_1, j_2 \leq n_1$ such that 
     $(a_{j_1}, a_{j_2}) \in \Rs_1$, that 
     $\equalTo(a'_{i_1}, a_{j_1})$, and that 
     $\equalTo(a'_{i_2}, a_{j_2})$.
\end{itemize}
 We say that $a_2$ is a sub-argumentation of 
 $a_1$ iff $a_1 \searrow a_2$. 
\end{definition}
\begin{definition}[Graphical (syntactic) constraints] 
For any $\Fe{}$,
  we say that\linebreak $(a_2, \langle m_2 \rangle) \in \varpi(\Fe{})$ satisfies 
      \textbf{G} iff both of the conditions below hold.
      \vspace{-0.1cm}
      \begin{enumerate}
          \item If $\langle m_2 \rangle \not= 0$ and if 
            $a_2$ is not unitary, then 
              there exists some $(a_1, \langle m_1 \rangle) \in 
      \varpi(\Fe{})$ such that $(a_1, \langle m_1 \rangle) \sqsupset
      (a_2, \langle m_2 \rangle)$ and that $a_1 \searrow a_2$.
           \item Every $(a_p, \langle m_2\rangle.n) \in \varpi(\Fe{})$
            satisfies \textbf{G}. 
          \end{enumerate}
 \end{definition}   
\begin{example}[Graphical (syntactic) constraints] {\it (Continued)}
   As has been the case so far, let $\Fe{}$ be the argumentation in \fbox{A}. 
    and let its flat representation be as given in Example \ref{ex_2}. 
   Then $\Fe{}$ clearly satisfies $\textbf{G}$ for any 
   $(a, \langle m \rangle) \in \varpi(\Fe{})$, since: 
   \begin{itemize}
       \item $(\Fe{}, 0)$: there is nothing to show, as the sequence is 0. 
       \item $(a_i, 0.i)$, $1 \leq i \leq 5$: there is nothing to show, as 
                            $a_{1,\ldots,5}$ are unitary.
       \item $(a_6, 0.6)$: $\Fe{} \searrow (\{a_1, a_2\}, \emptyset, \{(a_1, a_2)\})$.
       \item $(a_i, 0.6.i)$, $1 \leq i \leq 2$: there is nothing to show, 
          as $a_1$ and $a_2$ are unitary.
       \item $(a_7, 0.7)$: $\Fe{} \searrow (\{a_2, a_4\}, \{(a_4, a_2)\}, \emptyset)$.
       \item $(a_2, 0.7.1), (a_4, 0.7.2)$: there is nothing to show, 
       as $a_2$ and $a_4$ are unitary. 
       \item $(a_8, 0.8)$: $\Fe{} \searrow (\{a_6, a_7\}, \{(a_7, a_6)\}, \emptyset)$.
       \item $(a_6, 0.8.1)$: see above for $(a_6, 0.6)$.
       \item $(a_7, 0.8.2)$: see above for $(a_7, 0.7)$.
       \item $(a_i, 0.8.1.i), 1 \leq i \leq 2$: there is nothing to show. 
       \item $(a_2, 0.8.2.1)$ and $(a_4, 0.8.2.2)$: there is nothing to show.
   \end{itemize}
   For comparison, however, suppose that $\Fe{}$ is the argumentation 
   in \fbox{B}, i.e. $\Fe{} = (\{a_6, a_7\}, \{a_7, a_6\}, \emptyset)$, 
   $a_6 = (\{a_1, a_2\}, \emptyset, \{(a_1, a_2)\})$, and 
   $a_7 = (\{a_2, a_4\}, \{(a_4, a_2)\}, \emptyset)$. Assume 
   $(a_6, 0.1), (a_7, 0.2) \in \varpi(\Fe{})$, then neither 
   of them satisfies \textbf{G}, because none of $a_1, a_2, a_4$ are 
   in $\textsf{Arg}(\Fe{}, 0)$. 
\end{example} 
Since the graph structure of a given $\Fe{}$ never changes, 
violation of the graphical constraint monotonically propagates up to $\langle m \rangle = 0$ (0 excluded) from longer sequences. Thus, it is rather straightforward to handle graphical 
constraint satisfaction. 
\subsection{Semantic constraints}  
By contrast, semantic constraint satisfaction depends on what labels are assigned 
to arguments, which adds to technical subtlety. We define a partial order 
on $\mathcal{L}$, and characterise semantic constraints based on them. 
\begin{definition}[Order in labels]
    Let $\succeq: \mathcal{L} \times \mathcal{L}$ 
    be $\{(?, +), (?, -), (+, +), (-, -), (?, ?)\}$. 
    We write $(l_1, l_2) \in \succeq$ alternatively
    as $l_1 \succeq l_2$.
\end{definition}
\begin{definition}[Semantic constraints]
    For any $\Fe{}$ and for any $\lambda \in \Lambda$, 
     we say that $(a_2, \langle m_2 \rangle) \in \varpi(\Fe{})$ and $\lambda$ satisfy:
    \begin{description}
      \item[S] iff for every $(a_1, \langle m_1 \rangle)
       \in \varpi(\Fe{})$, 
        if $\equalTo(a_1, a_2)$ $\andC$ $(a_1, \langle m_1 \rangle)  \sqsupset (a_2, \langle m_2 \rangle)$, 
        then $\lambda((a_1, \langle m_1 \rangle)) \succeq \lambda((a_2, \langle m_2 \rangle))$.
       \item[$\star$] 

     iff, for any 
     $(a_1, \langle m_1 \rangle) \in 
     \varpi(\Fe{})$, if there exist some $\langle m_3\rangle$, $n_1$ and 
     $n_2$ such that $\langle m_3 \rangle.n_1 = \langle m_1 \rangle$ and that 
     $\langle m_3 \rangle.n_2 = \langle m_2 \rangle$, and if 
     $\equalTo(a_1, a_2)$, then $\lambda((a_1, \langle m_1 \rangle))\linebreak = \lambda((a_2, \langle m_2 \rangle))$. 
    \end{description}
\end{definition}
     To speak of the use of $\succeq$, there should be no oddity from a semantic
     coherency perspective when $\lambda((a_1, \langle m_1)) = 
     \lambda((a_2, \langle m_2 \rangle))$
     for every $(a_1, \langle m_1 \rangle), (a_2, \langle m_2 \rangle) 
     \in \varpi(\Fe{})$ such that $\equalTo(a_1, a_2)$. That just means that same arguments 
     in $\Fe{}$ are assigned the same label. However, 
     consider our example \fbox{A}. There, $a_2$ in $\textsf{Arg}(\Fe{}, 0)$ which is assigned $?$ by a standard complete labelling is given 
     $-$ in $\textsf{Arg}(a_6, 0.6)$ by the Malaysian authorities and $+$ in $\textsf{Arg}(a_7, 0.7)$ by the defence lawyer. $?$ assigned to 
     an argument in $\langle m \rangle$, therefore, 
     can be interpreted flexibly in argumentation in $\langle m \rangle.\langle 
     m_p \rangle$, that it can be any of $+$, $-$, $?$ depending 
     on which part of the argumentation in $\langle m \rangle$ is selected to 
     be included in the argumentation in $\langle m \rangle.\langle m_p \rangle$
     (for some non-empty $\langle m_p \rangle$).
     This is the intuition for $\succeq$ and its use in $\textbf{S}$. \\
     \indent The $\star$ symbol denotes a semantic constraint among the members of 
     $\textsf{Arg}(\Fe{}, \langle m \rangle)$ for some $\langle m \rangle$, to prevent the same arguments 
     from being assigned a different label.
     \begin{example}[Semantic constraints]\label{ex_3}
   \begin{center}
       \includegraphics[scale=0.11]{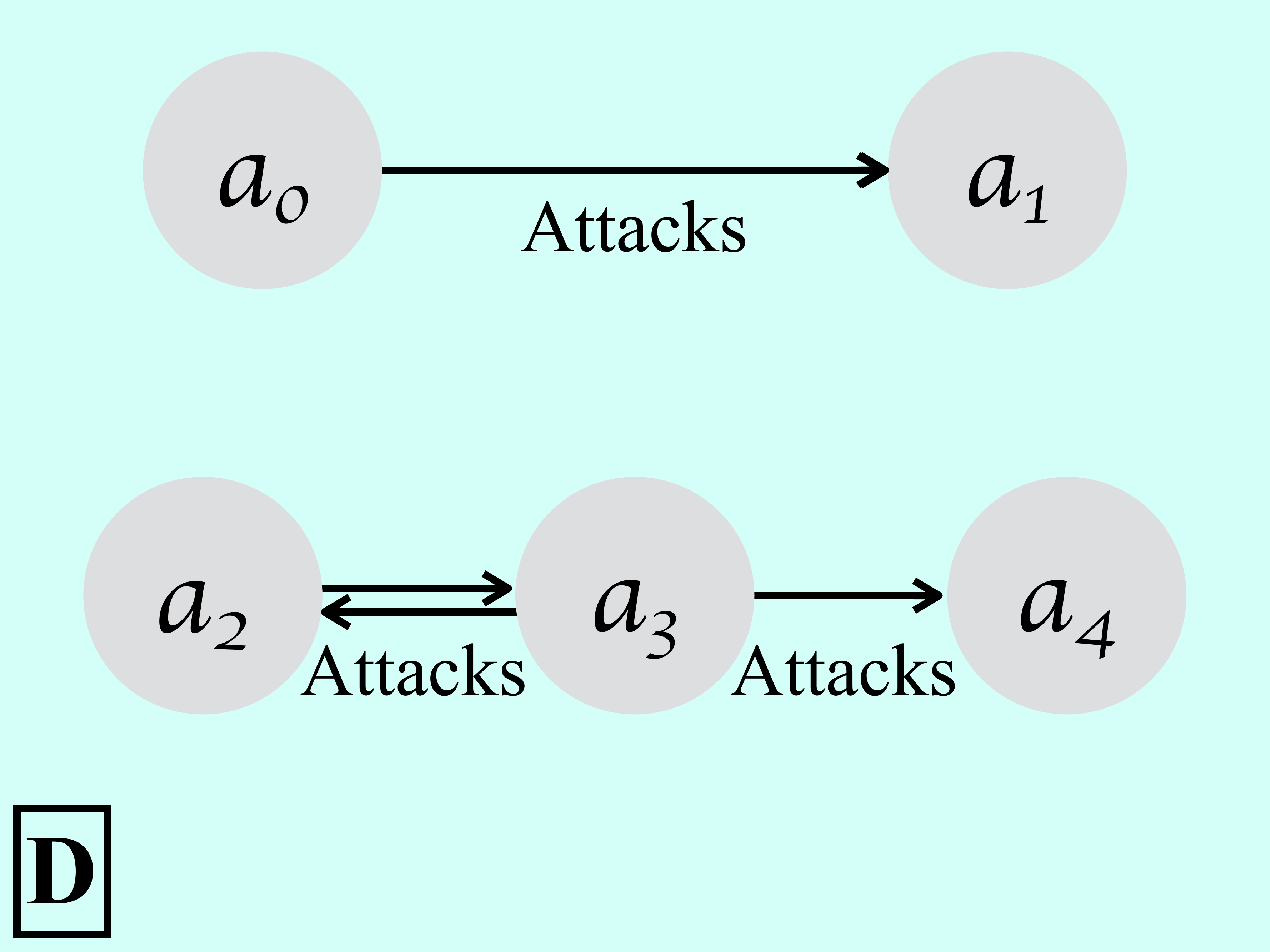}
       \includegraphics[scale=0.11]{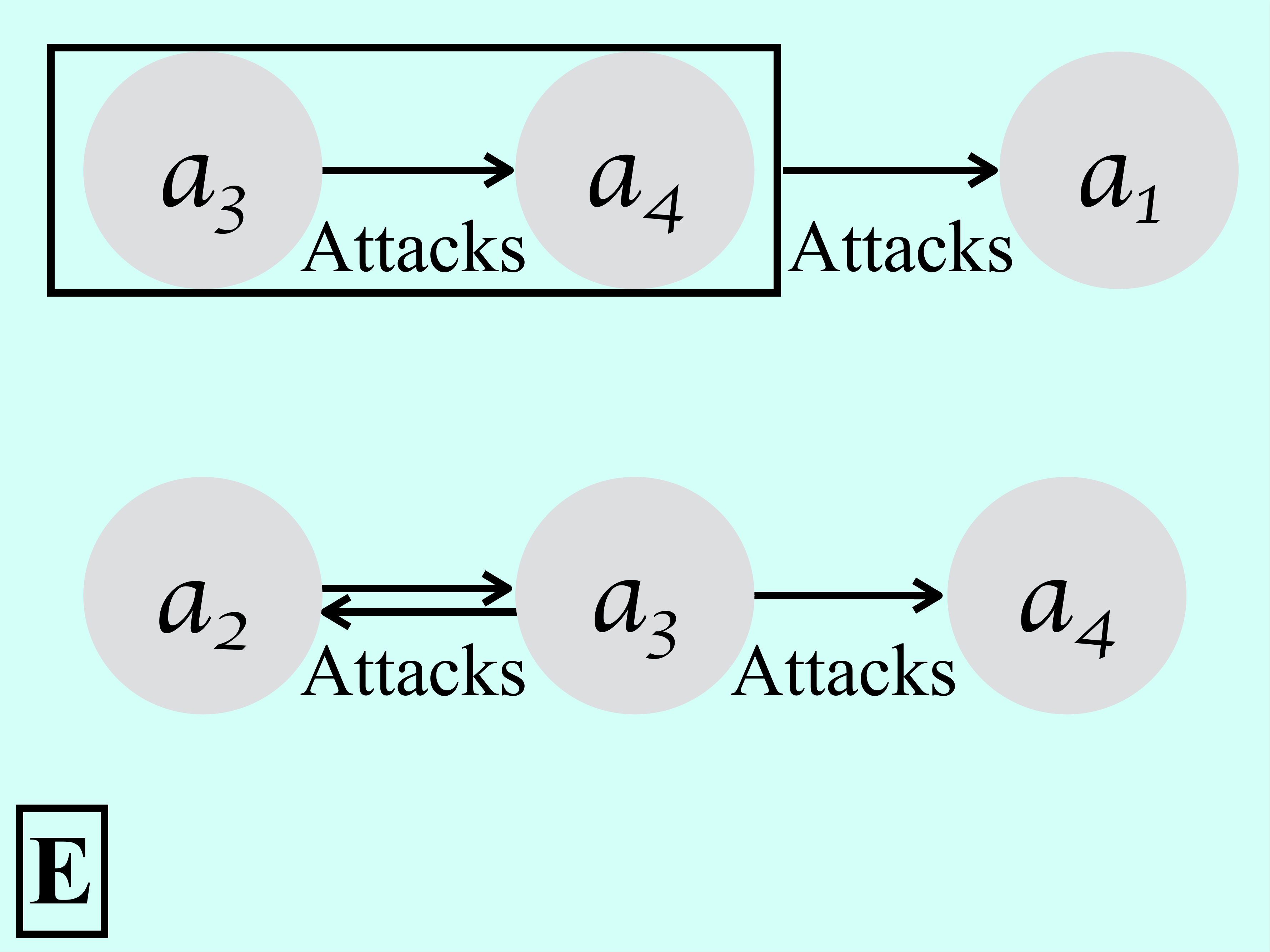}
       \includegraphics[scale=0.11]{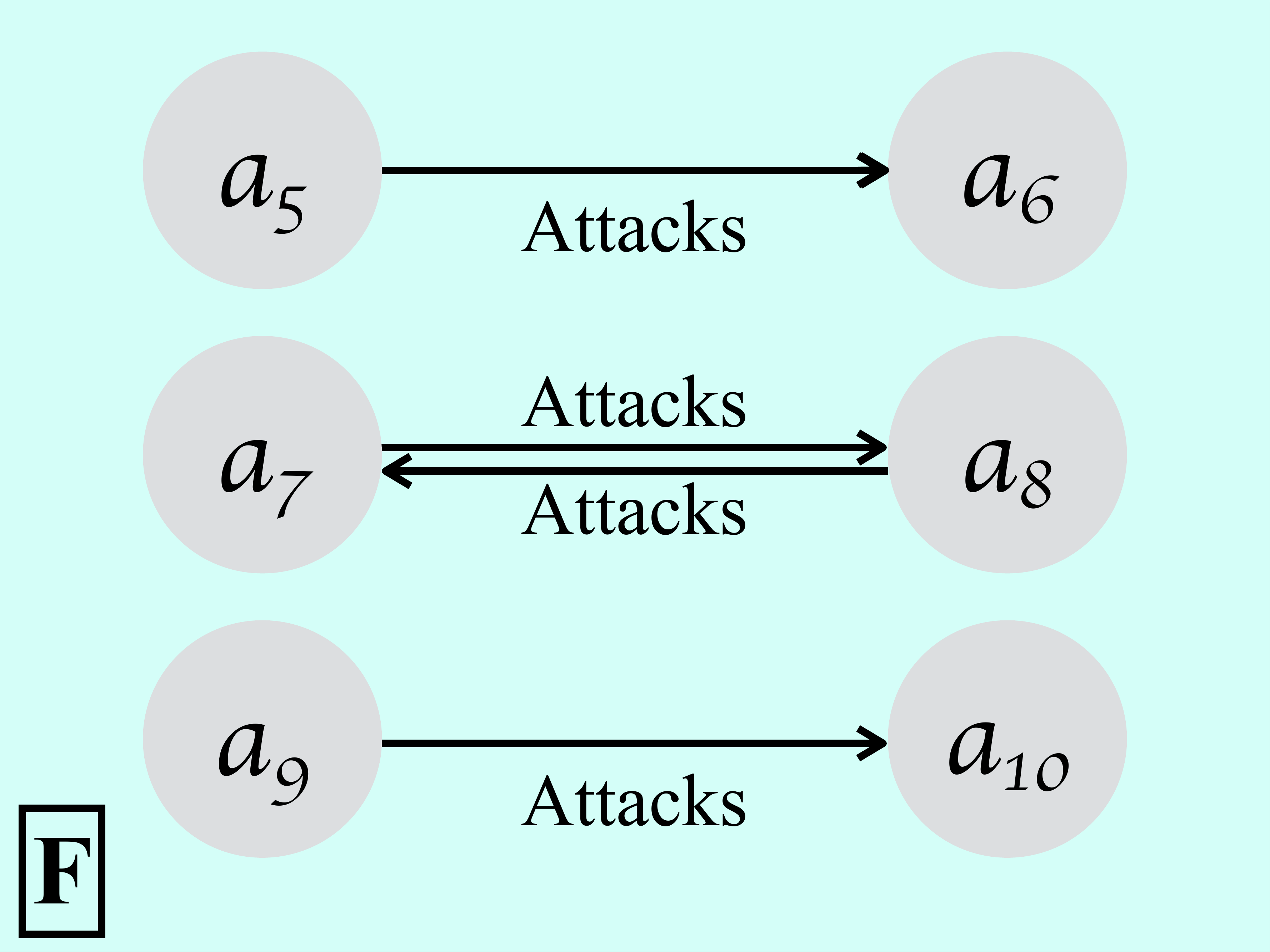}
   \end{center}
   Denote the argumentation in \fbox{D} by $\Fe{}$. Assume 
   that $a_0 = (\{a_3, a_4\}, \{(a_3, a_4)\}, \emptyset)$ as 
   shown in \fbox{E}, and assume that 
   $a_1, \ldots, a_4$ are all unitary such that 
   $\equalTo(a_i, a_j)$ for no distinct $1 \leq i, j \leq 4$.
   Assume $(\Fe{}, 0), (a_0, 0.0), (a_1, 0.1), (a_2, 0.2), (a_3, 0.3),
   (a_4, 0.4),  \linebreak(a_3, 0.0.1), (a_4, 0.0.2) \in \varpi(\Fe{})$. 
   The following are all the standard complete labelling distinct for 
   $\Fe{}$. 
   \begin{itemize}
       \item $\lambda_1((a_i, 0.i)) = +$ for 
       $i \in \{0, 2, 4\}$, $\lambda_1((a_i, 0.i)) = -$ for $i \in \{1, 3\}$, \\
       $\lambda_1((a_3, 0.0.1)) = +$, $\lambda_1((a_4, 0.0.2)) = -$. 
       \item $\lambda_2((a_i, 0.i)) = +$ for 
       $i \in \{0, 3\}$, $\lambda_2((a_i, 0.i)) = -$ for $i \in \{1, 2, 4\}$.\\
        $\lambda_2((a_3, 0.0.1)) = +$, $\lambda_2((a_4, 0.0.2)) = -$.
       \item $\lambda_3((a_0, 0.0)) = +$, $\lambda_3((a_1, 0.1)) = -$, 
       $\lambda_3((a_i, 0.i)) =\ ?$ for $i \in \{2, 3, 4\}$.\\
        $\lambda_3((a_3, 0.0.1)) = +$, $\lambda_3((a_4, 0.0.2)) = -$.
   \end{itemize}
   $\lambda_1$ and $((a_3, 0.0.1))$ do not satisfy \textbf{S}. $\lambda_2$ and $((a_3, 0.0.1))$ do. Now, while $\lambda_3((a_3, 0.3)) = ? \not= + = 
   \lambda_3((a_3, 0.0.1))$ and $\lambda_3((a_4, 0.4)) =\ ? \not= - = 
   \lambda_3((a_4, 0.0.2))$, \textbf{S} is respected, since 
  $? \succeq +, -$. \\
   \indent Denote the argumentation in \fbox{F} by $\Fe{}$. Assume 
   $(\Fe{}, 0), (a_i, 0.i) \in \varpi(\Fe{})$ for $5 \leq i \leq 10$. Assume
   $\equalTo(a_5, a_7)$ and  $\equalTo(a_9, a_8)$. There is a standard complete labelling $\lambda$ of $\Fe{}$ 
   such that $\lambda((a_5, 0.5)) = \lambda((0.7, 0.7)) = \lambda((a_9,0.9)) = +$, 
   $\lambda((a_6, 0.6)) = \lambda((a_8, 0.8)) = \lambda((a_{10}, 0.10)) = -$. 
   Then it could be understood that the same 
   argument ($a_8$ and $a_9$) in 0 is assigned both + and $-$ simultaneously, which on some occasions 
   outside of rhetoric argumentation 
   is not convenient. $\star$ is the condition against such labelling. 
\end{example}
\subsection{Generalisation of standard complete labelling} 
Let us refine our earlier definition of standard complete labelling with 
 those constraints. 
We define $\mathcal{C}$ to be 
$\{\textbf{G}, \textbf{S}, \star\}$, and refer to its subset 
by $C$ with or without a subscript. 
\begin{definition}[Complete labelling]
For any $\Fe{}$,  any $\lambda \in \Lambda$ and any 
 $C$, we say that 
   $\lambda$ is a complete labelling of $\Fe{}$ under $C$ iff
   every $(\{a_1, \ldots, a_n\}, \langle m \rangle) \in \varpi(\Fe{})$ satisfies
   all the following 
   conditions.

   \begin{itemize}[leftmargin=0.31cm]
       \item $\lambda((a_i, \langle m \rangle.n_i))
       = +$, $1 \leq i \leq n$, 
       iff both:
       \begin{itemize}
           \item every $a_j$, $ \leq j \leq n$, 
       attacking $a_i$ in $\ang{m}$ satisfies 
       $\lambda((a_j, \ang{m}.n_j)) = -$.
    
           \item if $\{\textbf{S}, 
           \star\} \in C$, 
           then assigning + to 
           $(a_i, \ang{m}.n_i)$ does not lead to 
       non-satisfaction of a semantic constraint $c \in C$ for any $(a, \ang{m_1}) \in \varpi(\Fe{})$.
       \end{itemize}
       
       \item $\lambda((a_i, \ang{m}.n_i)) = -$, 
       $1 \leq i \leq n$, iff both:
       \begin{itemize} 
          \item  there exists some $1 \leq j \leq n$ such that 
              $\lambda((a_j, \langle m \rangle.n_j)) = +$ 
              and that $a_j$ attacks $a_i$ in $\langle m \rangle$
              $\andC$ there is no $1 \leq k \leq n$ such that 
              $\lambda((a_k, \langle m \rangle.n_k)) = +$ 
              and that $a_k$ supports $a_i$ in $\langle m \rangle$.
              \item if $\{\textbf{S}, 
              \star\} \in C$, 
           then assigning $-$ to 
           $(a_i, \ang{m}.n_i)$ does not lead to 
       non-satisfaction of a semantic constraint $c \in C$ for any $(a, \ang{m_1}) \in \varpi(\Fe{})$.
       \end{itemize} 
        \item  $\lambda((a_i, \ang{m}.n_i)) \not= +$,
        $1 \leq i \leq n$, 
   if $\textbf{G} \in C$ $\andC$ $(a_i, \ang{m}.n_i)$
   does not satisfy $\textbf{G}$.  
   \end{itemize}
  
\end{definition} 
Any argument that violates the graphic 
constraint will not be assigned + 
if $\textbf{G} \in C$. For both + and $-$, the first condition matches exactly 
the condition given for a standard complete labelling. The second condition
ensures conditions for a standard complete labelling to be maximally respected, in the sense 
that they normally apply unless by applying them 
there will be a non-satisfaction 
of a semantic constraint $c \in C$.

\indent Broader intuition is as follows. If a standard labelling of $\Fe{}$ satisfies 
all $c \in C$, then the standard complete labelling should itself be a complete labelling 
under $C$. Thus, when a standard complete labelling is not a complete labelling 
under $C$, it is because the labelling either induces violation of some semantic constraint $c \in C$ 
for some arguments occurring in $\Fe{}$ 
or + assignment to an argument that violates 
the graphic constraint if $\textbf{G} \in C$. In such cases, therefore, 
it will be required to make minimal 
change to the standard complete labelling so the
resulting labelling satisfies semantic 
constraints (if they are in $C$) and 
does not assign $+$ to an argument 
that violates $\textbf{G}$ (if 
it is in $C$). The second conditions for $+$ and $-$ ensure that the change
be indeed minimal with a complete labelling. 
\begin{example}[Complete labelling] {\it (Continued)}\label{ex_4} 
  For the argumentation in \fbox{D} with the same assumptions and $\lambda_{1,2,3}$ 
  as in Example \ref{ex_3}, $\lambda_1$ is not a complete labelling under 
  $C$ if $\textbf{S} \in C$: $(a_0, 0.0)$ is assigned +, however,
  it does not satisfy $\textbf{S}$, so it should be assigned ?.
  $\lambda_2$ and $\lambda_3$ are a complete labelling under any $C$. 
  In addition, $\lambda_4$ such that 
  $\lambda_4((a_i, 0.i)) = ?, i \in \{0, 1\}$, 
  $\lambda_4((a_2, 0.2)) = \lambda_4((a_4, 0.4)) = +$, 
  $\lambda_4((a_3, 0.3)) = -$ is a complete labelling. $\lambda_4$ is not 
  a standard complete labelling. 

\end{example}
\begin{theorem}[Conservation]
   For any $\Fe{}$ and any $\lambda$, 
   if $C = \emptyset$, then $\lambda$ being a complete labelling 
   of $\Fe{}$ under $C$ is equivalent to $\lambda$ being a standard
   complete labelling of $\Fe{}$.
\end{theorem}
\begin{theorem}[No inclusion]
    For any $C \not= \emptyset$, there exists $\Fe{}$ such that some standard 
    complete labelling of $\Fe{}$ is not a complete labelling under $C$
    and that some complete labelling of $\Fe{}$ under $C$ is not a standard complete labelling.
\end{theorem}
This result holds even for $\Fe{}$ that does not involve any support (see Example \ref{ex_4}).
\subsection{Acceptability Semantics} 

\begin{definition}[Types of complete sets and acceptability semantics] 
    For any $\Fe{}$, we say that $A_1 \subseteq \textsf{Arg}(\Fe{}, 0)$ is: 
 complete under $C$ iff there exists a complete labelling $\lambda \in \Lambda$ of $\Fe{}$ under $C$ such that $A_1 = \{a \in \mathcal{A} \ | \ 
        \exists n \in \mathbb{N}\ \exists (a, 0.n) \in \varpi(\Fe{}).
        \lambda((a, 0.n)) = +\}$; 
    grounded under $C$ iff it is the set intersection of all complete sets under $C$; 
    preferred under $C$ iff it is a maximal complete set under $C$; and 
    semi-grounded iff it is a minimal 
    complete set. 
       
    We call the set of all complete / grounded / 
    semi-grounded / preferred sets under some $C$ complete / grounded / semi-grounded / preferred semantics under $C$. 
\end{definition} 
Note that ultimately we need to tell which subsets of
$\textsf{Arg}(\Fe{}, 0)$ are acceptable: this explains why we only look at $\langle m \rangle = 0$
for the semantics.
\begin{example}[Acceptability semantics] {\it (Continued)} 
For the argumentation in \fbox{D} with the same assumptions as in 
Example \ref{ex_3}, 
 if $\textbf{S} \not\in C$, then 
 we have the following semantics. 
 \vspace{-0.2cm} 
    \begin{itemize}
        \item[] complete:  $\{\{a_0\}, \{a_0, a_2, a_4\}, \{a_0, a_3\}\}$.
        \hspace{1.2cm} grounded:  $\{\{a_0\}\}$.
        \item[] semi-grounded: 
        $\{\{a_0\}\}$. \hspace{3.63cm} 
          preferred: $\{\{a_0, a_2, a_4\}, \{a_0, a_3\}\}$.  
    \end{itemize}
 If $\textbf{S} \in C$, then we have the following semantics.
 \vspace{-0.2cm} 
 \begin{itemize}
     \item[] complete:  $\{\{a_0\}, \{a_3\}, \{a_0, a_2, a_4\}\}$.
     \hspace{1.7cm} grounded: $\{\emptyset\}$.
     \item[] semi-grounded: 
     $\{\{a_0\}, \{a_3\}\}$\hspace{2.9cm} 
      preferred: $\{\{a_3\}, \{a_0, a_2, a_4\}\}$. 
 \end{itemize}
For the argumentation in \fbox{F} with the same assumptions as in Example~\ref{ex_3}, if 
$\star \in C$, then we have $\{\{a_5\}, \{a_9\}, \{a_5, a_9\}\}$ (complete), 
$\{\emptyset\}$ (grounded), $\{\{a_5\}, 
\{a_9\}\}$ (semi-grounded) and 
$\{\{a_5, a_9\}\}$ (preferred). 
\end{example} 

\begin{theorem}[Existence] 
For any $\Fe{}$ and any $C$, we have all the following:
\begin{enumerate}
    \item There exists at least one complete set under $C$.
    \item The grounded set may not be a complete set under $C$.
    \item A semi-grounded set is a complete set
    under $C$.
    \item If there is only one 
    semi-grounded set, then it is the grounded set
    under $C$.
\end{enumerate}
Additionally, if $\textbf{S}, \star \not\in C$, but not necessarily
otherwise, there is only one 
semi-grounded set under $C$.
\end{theorem}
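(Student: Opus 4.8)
The plan is to handle the five items in turn, constantly using the correspondence between complete sets (taken at $\langle m\rangle=0$) and complete labellings under $C$, and the fact that $\textsf{Arg}(\Fe{},0)$ -- indeed every $\textsf{Arg}(\Fe{},\langle m\rangle)$ -- is finite because $\Fe{}$ is finite.

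For Item~1 (existence), when $C=\emptyset$ it is enough to produce one standard complete labelling. Here I would note that in the standard definition the label of $(a_i,\langle m\rangle.n_i)$ is determined by the labels of its siblings $(a_j,\langle m\rangle.n_j)$ alone, so the labelling problem splits into the independent, finite, local bipolar frameworks attached to the non-unitary occurrences in $\varpi(\Fe{})$, each of which admits a complete labelling (e.g. a grounded-style one, with the weak-support clause turning the relevant $-$'s into $?$'s); glueing these gives a standard complete labelling, which by the Conservation theorem is a complete labelling under $\emptyset$. For $C\neq\emptyset$ I would run a fixpoint construction: start from the all-$?$ labelling and repeatedly re-derive every label by exactly the three conditions of the definition, promoting an occurrence to $+$ (resp. $-$) only when its attack condition holds and the promotion does not break $\textbf{G}$ (which is label-independent, hence easy to track), $\textbf{S}$ or $\star$ at any occurrence of $\varpi(\Fe{})$. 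Finiteness of $\varpi(\Fe{})$ bounds the state space; the content is to show the iteration stabilises at a genuine complete labelling under $C$. This is the main obstacle, precisely because $\textbf{S}$ and $\star$ make the naive operator non-monotone: promoting one occurrence can forbid a label at an $\equalTo$-linked occurrence and thereby force demotions that cascade. I would control this by updating along the $\sqsupseteq$-chains and $\star$-classes in a fixed order and arguing that an occurrence once forced down to $?$ by a semantic constraint is never forced back up, which rules out oscillation; equivalently, that the ``iff''-conditions of a complete labelling under $C$ are always jointly satisfiable.

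Items~2--4 are short. For Item~2 I would simply quote a computed instance: \fbox{D} with $\textbf{S}\in C$ has complete sets $\{a_0\},\{a_3\},\{a_0,a_2,a_4\}$, whose intersection $\emptyset$ is not among them (alternatively \fbox{F} with $\star\in C$). Item~3 is immediate: by Item~1 the family of complete sets under $C$ is non-empty, and it is finite since $\textsf{Arg}(\Fe{},0)$ is, so it has $\subseteq$-minimal members; these are by definition the semi-grounded sets, each of which is trivially a complete set. For Item~4 I would invoke the elementary fact that in a finite poset a unique minimal element is the least element: if $M$ is the only minimal complete set under $C$, then $M\subseteq S$ for every complete set $S$, and since $M$ is itself complete, $\bigcap\{S : S \text{ complete under } C\}=M$, i.e. $M$ is the grounded set under $C$.

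For Item~5 (uniqueness exactly when $\textbf{S},\star\notin C$): when $\textbf{S},\star\notin C$ the $+$- and $-$-conditions no longer mention $\equalTo$-linked occurrences, so the labelling again decomposes over the occurrences of $\varpi(\Fe{})$, each local problem being the bipolar grounded computation restricted to the fixed, label-independent set of $\textbf{G}$-admissible arguments. I would take the all-$?$-started iteration, show it converges to a complete labelling under $C$, and show its $+$-set at $0$ is contained in the $+$-set of every complete labelling under $C$ -- using that support here is merely supplementary, so it can only turn a $-$ into a $?$ and hence never render extra arguments acceptable. That least complete set is then the unique minimal, hence the unique semi-grounded set. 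For the negative half I would again exhibit the computed examples \fbox{D} with $\textbf{S}\in C$ and \fbox{F} with $\star\in C$, each of which has two semi-grounded sets. Overall, Items~2--5 follow from the definitions plus finiteness or reduce to already-computed examples; the real work, and the expected sticking point, is Item~1 -- showing that the constraint-repair fixpoint always terminates at a complete labelling under $C$ in spite of the non-monotonicity caused by $\textbf{S}$ and $\star$.
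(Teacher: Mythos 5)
Your treatment of Items 2--5 is sound and matches what the paper intends: Item 2 is witnessed by the computed example \fbox{D} with $\textbf{S}\in C$ (or \fbox{F} with $\star\in C$), Item 3 is non-emptiness plus finiteness of the family of complete sets, Item 4 is the unique-minimal-implies-least argument in a finite poset, and Item 5 splits into a constraint-free least-labelling argument and the same two counterexamples. The paper dispatches these with ``follow immediately from the preceding discussion, on the assumption that 1.\ holds,'' so you are if anything more explicit there.

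The gap is exactly where you flag it, in Item 1, and your proposed repair does not close it. You start from the all-$?$ labelling and iterate upwards, and you propose to rule out oscillation via the invariant ``an occurrence once forced down to $?$ by a semantic constraint is never forced back up.'' But the operator you describe is not downward-closed in the relevant sense: the $-$ condition contains the conjunct ``there is no $+$-labelled supporter,'' so demoting a supporter from $+$ to $?$ (to satisfy $\textbf{S}$ or $\star$) can newly \emph{force} a $-$ on the supported occurrence that previously had to be $?$ --- a promotion triggered by a demotion. Hence the invariant you rely on is false for the naive operator, and ``updating along $\sqsupseteq$-chains and $\star$-classes in a fixed order'' is an intention, not an argument, that the iteration stabilises at a labelling satisfying all the ``iff'' clauses simultaneously. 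The paper goes in the opposite direction: it starts from a \emph{standard} complete labelling $\lambda$ (which exists because the standard conditions are local to each $\langle m\rangle$, as you note), repeatedly applies a function $\textsf{down}$ that resets the constraint-violating occurrences (and, for $\star$, their whole $\equalTo$-class at that level) to $?$, invoking a dedicated lemma (Lemma 1) asserting that this can be done while re-satisfying the standard conditions everywhere else and yielding a strictly less informative labelling; by finiteness the iteration terminates at a constraint-satisfying $\lambda_1$, and a maximal constraint-satisfying labelling lying between $\lambda_1$ and $\lambda$ in informativeness is then a complete labelling under $C$. That top-down ``repair by forgetting'' construction, backed by the lemma, is the ingredient your proposal is missing; without it (or a genuine convergence proof for your bottom-up operator), Item 1 --- on which Items 3 and 4 depend --- is not established.
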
 
\section{Related Work and Conclusion}\label{sec:related}
\noindent \textbf{Structured Argumentation.}  
Structured argumentation \cite{Dung09,Modgil13} lets arguments
associated in premise-conclusion relation. Consider \ABA-style 
structured argumentation \cite{Dung09}:\linebreak $(\mathcal{H}, \mathcal{I}, \mathcal{J}, \mathcal{K})$ for: 
some finite set $\mathcal{H}$ of entities; $\mathcal{I}: \mathcal{H} \times 2^{\mathcal{H}}$; $\emptyset \not\subseteq \mathcal{J} \subseteq \mathcal{H}$; and 
$\mathcal{K}: \mathcal{J} \rightarrow \mathcal{H}$. $H_1 \subseteq \mathcal{H}$ is said to support $h_1 \in \mathcal{H}$ iff $(h_1, H_1) \in \mathcal{I}$. For any 
$(h, H) \in \mathcal{I}$, if $H \not= \emptyset$, then $h \in \mathcal{H} \backslash 
\mathcal{J}$. 
This tuple $(\mathcal{H}, \mathcal{I}, \mathcal{J}, \mathcal{K})$ ought to be such that the set $\mathcal{T}$ 
below exists. \\
\indent Let $\mathcal{T}: \mathcal{H} \times 2^{\mathcal{T}}$ be the least set satisfying both 
(1) $(h, \emptyset) \in \mathcal{T}$ if $h \in \mathcal{J}$, and (2) if 
$(h_1, t_1), \ldots, (h_n, t_n) \in \mathcal{T}$ and if $(h_0, \{h_1, \ldots, h_n\}) \in \mathcal{I}$, then 
$(h_0, \{(h_1, t_1), \ldots, (h_n, t_n)\})\linebreak \in \mathcal{T}$. 
Every $t \in \mathcal{T}$ is an argument in $\ABA$. 
$(h_1, t'_1) \in \mathcal{T}$ attacks $t_2 \in \mathcal{T}$ iff $\mathcal{K}(h) = h_1$
for some $h \in \mathcal{J}$ occurring in $t_2$. 
$T_1 \subseteq \mathcal{T}$ defends $t \in \mathcal{T}$ just when
every $t_p \in \mathcal{T}$ attacking $t$ is attacked by at least some $t_q \in T_1$. 
It is complete just when: (1) there is no attack among the members of $T_1$; and (2) 
it includes all $t \in \mathcal{T}$ it defends.

Encoding of $\ABA$ into $\Fe{} = (A, R, \Rs)$ is rather smooth.  
We define $\mathcal{X}$ to be such that $x_{h} \in\mathcal{X}$ iff
$h \in \mathcal{J}$.  
We define $A$ to be the least set that satisfies both:
(1) $a_h = (\{x_h\}, \emptyset, \emptyset)$ is in $A$ if
$(h, \emptyset) \in \mathcal{T}$ for some $h$; and (2) 
$a_{t} = (\{a_{T_1}, a_{h}\}, \emptyset, \{(a_{T_1}, a_h)\})$ such that $a_{T_1} = (\{a_{t_1}, \ldots, a_{t_n}\}, \emptyset, \emptyset)$ is in $A$ if
$T_1 = \{t_1, \ldots, t_n\}$ for some $T_1 \subseteq \mathcal{T}$ and some $h$ such that 
$(h, T_1) \in \mathcal{T}$. 
We define $R$ to be such that $(a_1, a_2) \in R$ for some $a_1, a_2 \in A$ 
iff, if $a_1$ ($a_2$) is the encoding of $t_1 \in \mathcal{T}$ ($t_2 \in \mathcal{T}$), 
then $t_1$ attacks $t_2$. Finally, we define $\Rs$ to be empty. 
Then $A_1 \subseteq A$ is a complete set ($\langle m \rangle = 0$) under $C = 
\emptyset$ iff 
$T_1 \subseteq \mathcal{T}$ such that $A_1$ is the encoding of $T_1$ is complete. Trivially, 
      $\ABA$ is an instance of $\BBA$. 
      
More than just the result of subsumption, the process of encoding highlights 
the degree of structuralisation allowed in $\BBA$ and $\ABA$, which is similarly true of $\ASPIC$ \cite{Modgil13}: (1) support (premise-conclusion) in $\ABA$ is accessible for a child (premise) of a subtree of $t \in \mathcal{T}$, but 
no support, if it should occur in the parent (conclusion) of a subtree of $\mathcal{T}$, is accessible. That is, for the 
argument: [something is a support for [that something is a support for something]], 
the outer support is accessible in $\ABA$ formalisation, but the inner support, 
within the conclusion, 
must be encapsulated. $\BBA$ relaxes this restriction. Similarly, 
(2) any construct of the sort: [[that something attacks something] supports something] 
is expressed in $\ABA$ by encapsulating the inner attack. In $\BBA$, the inner attack is also accessible. 
The concept of block argumentation thus allows for a greater reach to the internal 
structure of arguments. It is our hope, then, that the study in this paper will be of interest also to 
the scientific community working on structured argumentation.  

\vspace{0.15cm}
\noindent \textbf{Dependency among Arguments.}  
Dung abstract argumentation does not specify 
the nature of an abstract entity. It is possible two arguments in an argumentation graph are interpreted 
identical. For rhetoric argumentation, 
acceptance of an argument may be considered with respect to the position in the graph 
in which it occurs; thus such a scenario incurs comparatively small an issue. 
Outside rhetoric argumentation, however, acceptability statuses of arguments 
are often preferred to be regarded indicative of acceptance of the arguments and not 
their acceptance with respect to their positions in the graph. It is then 
that multiple occurrences of the same argument in a graph are less desirable. An extreme
case is ``$a_1$ attacks $a_2$''. In case they are interpreted the same, classic semantics predict the argument to be both acceptable and not acceptable, and yet it sceptically accepts it, i.e. 
$a_1$. 
Such issue from dependencies seems to have been seldom reflected back to 
abstract argumentation semantics. In abstract dialectical frameworks 
\cite{Brewka10}, acceptability statuses of 
arguments are given based on those of 
their attackers/supporters. In this work, we showed 
that practically any arguments in a graph, whether 
or not they are connected, may have dependencies. We proposed use of both 
graphical and semantic constraints. 

\vspace{0.15cm}
\noindent \textbf{Higher-Level and Meta Argumentation.}
Meta argumentation \cite{Modgil11,Villata10} facilitates layers of argumentation, to discuss 
attack relation among arguments and so on in a preceding layer. From a layer 
to a layer, there is a clear boundary (and the separation does not 
disappear if one is to flatten them). Our motivating example in Section~\ref{sec:motivation} shows, 
however, that [something attacks something] may itself be an argument, which $\BBA$ can handle
uniformly as with unitary ones. Temporal / modal argumentation networks \cite{Barringer12}, as far as we are aware, is the first abstract 
argumentation study 
that hinted at the possibility that an unattacked argument may not be outright accepted. A 
temporal argument $\diamond a$ asks effectively if $a$ is accepted in the argumentation 
at a connected possible 
world for the argument to be accepted-able. The temporal argument allows 
 an argument in another possible world to be referred to. In this work, 
 we have generalised such semantic dependency 
 with the semantic constraints. 
Higher-level argumentation
\cite{Gabbay09b} considers substitution of an argumentation framework into 
an argument to identify which arguments 
in the substituted argumentation are the actual interactors to the outside. 
By contrast, in block argumentation, the key is to express the dual roles of an argument: 
as an argument and as an argumentation (similar emphasis 
given for coalition formation \cite{Arisaka17a}) and how they influence 
acceptability semantics. In block 
argumentation, it is possible that an argument
as an argumentation, 
and not necessarily some arguments in the 
argumentation, attacks or supports other 
arguments. 

\vspace{0.2cm}
\noindent \textbf{Conclusion.}
We presented block argumentation, with comparisons 
to structured argumentation for which we have 
demonstrated a prospect of further generalisation. Block argumentation 
reveals multiple occurrences of the same arguments within an argumentation, and, as we have identified, both graphical and semantic
coherency pose challenges to the classic principle of always 
accepting unattacked arguments. We proposed to tackle this issue with constraints. For future work, we plan to study 
their granularity to cater 
for specific applications, as well as to incorporate probabilistic or dynamic approaches. 

\section*{Acknowledgement}
The first author thanks Jiraporn Pooksook for a tutorial on $\ABA$. 


\bibliographystyle{splncs04}
\bibliography{references} 

\begin{thebibliography}{10}
\providecommand{\url}[1]{\texttt{#1}}
\providecommand{\urlprefix}{URL }
\providecommand{\doi}[1]{https://doi.org/#1}

\bibitem{Arisaka16d}
Arisaka, R., Satoh, K.: {Voluntary Manslaughter? A Case Study with
  Meta-Argumentation with Supports}. In: {JSAI-isAI Workshops}. pp. 241--252
  (2016)

\bibitem{Arisaka17a}
Arisaka, R., Satoh, K.: {Coalition Formability Semantics with
  Conflict-Eliminable Sets of Arguments}. In: {AAMAS}. pp. 1469--1471 (2017)

\bibitem{Barringer12}
Barringer, H., Gabbay, D.M.: {Modal and Temporal Argumentation Networks}.
  {Argument \& Computation}  \textbf{3}(2-3),  203--227 (2012)

\bibitem{bistarelli18}
Bistarelli, S., Rossi, F., Santini, F.: {A novel weighted defence and its
  relaxation in abstract argumentation}. International Journal of Approximate
  Reasoning  \textbf{92},  66--86 (2018)

\bibitem{lpnmr16}
Bistarelli, S., Santini, F.: {A Hasse Diagram for Weighted Sceptical Semantics
  with a Unique-Status Grounded Semantics}. In: LPNMR. {LNCS}, vol. 10377, pp.
  49--56. Springer (2017)

\bibitem{Boella10}
Boella, G., Gabbay, D.M., van~der Torre, L., Villata, S.: {Support in Abstract
  Argumentation}. In: {COMMA}. pp. 111--122 (2010)

\bibitem{Brewka10}
Brewka, G., Woltran, S.: {Abstract Dialectical Franeworks}. In: {KR}. pp.
  102--111 (2010)

\bibitem{Caminada06}
Caminada, M.: {On the Issue of Reinstatement in Argumentation}. In: {JELIA}.
  pp. 111--123 (2006)

\bibitem{Cayrol11}
Cayrol, C., Lagasquie-Schiex, M.C.: {Bipolarity in Argumentation Graphs:
  Towards a Better Understanding}. In: {SUM}. pp. 137--148 (2011)

\bibitem{Dung95}
Dung, P.M.: On the {Acceptability} of {Arguments} and {Its} {Fundamental}
  {Role} in {Nonmonotonic} {Reasoning}, {Logic Programming}, and n-{Person}
  {Games}. Artificial {Intelligence}  \textbf{77}(2),  321--357 (1995)

\bibitem{Dung09}
Dung, P.M., Kowalski, R.A., Toni, F.: {Assumption-based argumentation}. In:
  {Argumentation in Artificial Intelligence}, pp. 199 -- 218. Springer (2009)

\bibitem{Gabbay09b}
Gabbay, D.M.: {Semantics for Higher Level Attacks in Extended Argumentation
  Frames Part 1: Overview}. {Studia Logica}  \textbf{93}(2-3),  357--381 (2009)

\bibitem{Jech2000}
Jech, T.: {SET THEORY}. Springer, 3rd edn. (2000)

\bibitem{Kleene52}
Kleene, S.C.: Introduction to META-MATHEMATICS. North-Holland Publishing Co.
  (1952)

\bibitem{Modgil11}
Modgil, S., Bench-Capon, T.J.M.: {Metalevel argumentation}. {Journal of Logic
  and Computation}  \textbf{21}(6),  959--1003 (2011)

\bibitem{Modgil13}
Modgil, S., Prakken, H.: {A general account of argumentation with preferences}.
  {Artificial Intelligence}  \textbf{195},  361--397 (2013)

\bibitem{Nouioua10}
Nouioua, F., Risch, V.: {Bipolar Argumentation Frameworks with Specialized
  Supports}. In: {ICTAI}. pp. 215--218 (2010)

\bibitem{Nouioua11}
Nouioua, F., Risch, V.: {Argumentation Frameworks with Necessities}. In: {SUM}.
  pp. 163--176 (2011)

\bibitem{Oren10}
Oren, N., Reed, C., Luck, M.: {Moving between Argumentation Frameworks}. In:
  {COMMA}. pp. 379--390 (2010)

\bibitem{hunter18}
Polberg, S., Hunter, A.: Empirical evaluation of abstract argumentation:
  Supporting the need for bipolar and probabilistic approaches. International
  Journal of Approximate Reasoning  \textbf{93},  487--543 (2018)

\bibitem{Slife95}
Slife, B.D., Williams, R.N.: What's Behind the Research? Discovering Hidden
  Assumptions in the Behavioral Sciences. Sage Publications (1995)

\bibitem{Villata10}
Villata, S., Boella, G., Gabbay, D.M., van~der Torre, L.: Arguing about {Trust}
  in {Multiagent} {Systems}. In: AAAI. pp. 236--243. AAAI Press (2010)

\end{thebibliography}

\section*{Appendix (Proofs)}
\setcounter{theorem}{0}
\begin{theorem}[Correspondence between extension-based complete sets and complete labellings]
   For any $\Fe{}$ and any $\langle m \rangle$, 
   $A_1 \subseteq \textsf{Arg}(\Fe{}, \langle m \rangle)$ is standard complete in $\langle m \rangle$
   only if there is some standard complete labelling $\lambda$ of $\Fe{}$ 
   such that
   $\lambda((a_p, \langle m \rangle.n)) = +$  is equivalent to 
   $a_p \in A_1$ for any $(a_p, \langle m \rangle.n) \in  \varpi(\Fe{})$. Conversely, $\lambda$ is a standard complete labelling of $\Fe{}$ 
   only if, for every $(a, \langle m \rangle) \in \varpi(\Fe{})$,
  {\small $\{a_p \in \textsf{Arg}(\Fe{}, \langle m \rangle) \ | \ 
  \exists n \in \mathbb{N}.\lambda((a_p, \langle m \rangle.n)) = +\}$} is a standard complete set in $\langle m \rangle$.  
\end{theorem}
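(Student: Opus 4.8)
The plan is to prove the two directions of the correspondence separately, working directly from the definitions of standard complete set and standard complete labelling, and in both cases arguing \emph{position by position} in the flat representation $\varpi(\Fe{})$.

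For the first direction, suppose $A_1 \subseteq \textsf{Arg}(\Fe{}, \langle m \rangle)$ is standard complete in $\langle m \rangle$. First I would construct a candidate labelling $\lambda$ on all of $\varpi(\Fe{})$, not just at $\langle m \rangle$. At $\langle m \rangle$ itself, set $\lambda((a_p, \langle m \rangle.n)) = +$ exactly when $a_p \in A_1$; set it to $-$ when $a_p$ is attacked in $\langle m \rangle$ by some member of $A_1$ not supported by any member of $A_1$; and $?$ otherwise. For every other position $\langle m' \rangle$ in $\varpi(\Fe{})$ (those not of the form $\langle m \rangle.n$, and those strictly below such positions), I need to extend $\lambda$ to \emph{some} standard complete labelling of the sub-argumentation sitting at $\langle m' \rangle$; this is possible because each such sub-argumentation, viewed on its own, is a finite Dung-style framework with supports, and standard complete labellings always exist there (the empty-admissible/grounded construction, or a fixpoint argument, gives one). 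Then I would verify the two labelling clauses at $\langle m \rangle$: the $+$-clause follows because $A_1$ is conflict-free and includes all arguments it defends (so $a_p \in A_1$ iff every attacker of $a_p$ in $\langle m \rangle$ is attacked by $A_1$ and not supported by $A_1$, which translates exactly into ``every attacker gets $-$''), and the $-$-clause follows by the definition of $-$ above together with the ``not supported'' side-condition. The clauses at the other positions hold by the choice of extension. Finally, equivalence of $\lambda((a_p, \langle m \rangle.n)) = +$ with $a_p \in A_1$ is immediate from the construction; I should note that $a_p$ may occur at several positions $\langle m \rangle.n$, but all such occurrences carry the same membership status in $A_1$, so the statement is well posed.

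For the converse, suppose $\lambda$ is a standard complete labelling of $\Fe{}$, fix $(a, \langle m \rangle) \in \varpi(\Fe{})$ with $a = (\{a_1,\ldots,a_n\}, R, \Rs)$, and let $A_1 = \{a_p \in \textsf{Arg}(\Fe{}, \langle m \rangle) \mid \exists n.\, \lambda((a_p, \langle m \rangle.n)) = +\}$. I would check the three requirements for $A_1$ to be standard complete in $\langle m \rangle$: containment in $\textsf{Arg}(\Fe{}, \langle m \rangle)$ is by definition; conflict-freeness follows because if $a_i, a_j \in A_1$ with $a_i$ attacking $a_j$ in $\langle m \rangle$ then $\lambda((a_i, \langle m \rangle.n_i)) = +$ would force $\lambda((a_j, \langle m \rangle.n_j)) = -$ by the $-$-clause, contradicting $a_j \in A_1$ (here I use that the attack is among the direct children indexed $1,\ldots,n$, so the labelling clauses literally apply); and ``includes all it defends'' follows because if every attacker of $a_i$ in $\langle m \rangle$ is attacked by $A_1$ and none supported by $A_1$, then each such attacker gets $-$ (by the $-$-clause, using a $+$-labelled attacker from $A_1$ and the absence of a $+$-labelled supporter), whence $a_i$ gets $+$ by the $+$-clause, so $a_i \in A_1$.

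The main obstacle, and the place where I would be most careful, is the extension step in the forward direction: the claim that a partial labelling specified only at the children of one node extends to a full standard complete labelling of the whole $\Fe{}$. One must check that the nodes \emph{above} $\langle m \rangle$ and the node at $\langle m \rangle$ itself can still be labelled consistently with the clauses while respecting the prescribed values below; since the labelling clauses at a node only constrain the relationship among that node's immediate children (attacks/supports ``in $\langle m \rangle$'' range over the direct constituents only), the nodes at different positions are in fact only loosely coupled through shared argument \emph{positions}, and for the ``no constraints'' (standard) case there is no semantic coupling forcing equal labels on equal arguments at different positions. So the extension exists, but writing this cleanly — in particular handling the positions that are proper prefixes of $\langle m \rangle$ — is the delicate bookkeeping. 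I would isolate it as a small lemma: every position in $\varpi(\Fe{})$ admits some standard complete labelling locally, and these local choices can be made independently, which together reconstitute a global standard complete labelling.
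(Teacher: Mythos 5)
Your proposal is correct and follows essentially the same route as the paper's proof: construct $\lambda$ by assigning $+$ to members of $A_1$, $-$ to arguments attacked (and not supported) by $A_1$, and $?$ otherwise, then verify the labelling clauses at $\langle m \rangle$; conversely collect the $+$-labelled arguments and check conflict-freeness and inclusion of defended arguments --- your extra care about extending $\lambda$ to the remaining positions of $\varpi(\Fe{})$ addresses a point the paper's proof silently skips. One small repair: in the converse direction, conflict-freeness should be derived from the $+$-clause (an attacker labelled $+$ prevents the attacked argument from satisfying the condition for $+$), not from the $-$-clause as you state, since a $+$-labelled supporter of $a_j$ would block the $-$ assignment you invoke.
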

\begin{proof} 
  Assume $A_1 \subseteq \textsf{Arg}(\Fe{}, \langle m \rangle)$ is
  standard complete in $\langle m \rangle$. 
  Assume some $\lambda \in \Lambda$ such that 
  $\lambda((a_p, \langle m \rangle.n)) = +$ for any
  $(a_p, \langle m \rangle.n) \in \varpi(\Fe{})$
  with $a_p \in A_1$, that 
  $\lambda((a_p, \langle m \rangle.n)) = -$ for any 
  $(a_p, \langle m \rangle.n) \in \varpi(\Fe{})$ 
  with $a_p \in \textsf{Arg}(\Fe{}, \ang{m})$ if $a_p$ is attacked  
  by $A_1$, and that 
  $\lambda((a_p, \langle m \rangle.n)) =\ ?$ for any other
  $(a_p, \langle m \rangle.n) \in \varpi(\Fe{})$ 
  with $a_p \in \textsf{Arg}(\Fe{}, \langle m \rangle)$. 
  We show that $\lambda$ is a standard complete labelling. 
  We show firstly that every $(a_p, \langle m \rangle.n) \in \varpi(\Fe{})$ such that $\lambda((a_p, \langle m \rangle.n))
  = +$ satisfies the corresponding condition (Definition 6). 
  Suppose, by way of showing contradiction, that there is 
  some $(a_p, \langle m \rangle.n) \in \varpi(\Fe{})$ such that $\lambda((a_p, \langle m \rangle.n))
  = +$ does not satisfy the corresponding condition. Then, 
  there must exist some $(a_1, \ang{m}.n_1) \in \varpi(\Fe{}))$
  such that $\lambda((a_1, \ang{m}.n_1)) = +$ and that $a_1$ 
  attacks $a_p$ in $\ang{m}$. 
  However, it cannot be $+$, since $A_1$ is conflict-free, contradiction. Second, we 
  show that 
  every $(a_p, \langle m \rangle.n) \in \varpi(\Fe{})$ such that $\lambda((a_p, \langle m \rangle.n))
  = -$ satisfies the corresponding condition (Definition 6). By construction of 
  $\lambda$, however, $a_p$ is attacked 
  by a member of $A_1$. Since $A_1$ is defended,
  there exists no member of $A_1$ that supports
  $a_p$. Finally, we show that every $(a_p, \langle m \rangle.n) \in \varpi(\Fe{})$ such that $\lambda((a_p, \langle m \rangle.n))
  =\ ?$ does not satisfy the conditions
  for $+$ or $-$ (Definition 6). 
  If it satisfied the condition for $+$, 
  then every $a_1$ attacking it 
  would get $-$. By construction of 
  $\lambda$, however, 
  every such $a_1$ would be attacked by a member of $A_1$ in $\ang{m}$. Since $A_1$ includes all arguments 
  it defends in $\ang{m}$, it follows 
  that $a_p \in A_1$, contradiction. 
  If it satisfied the condition for $-$, 
  then it would be attacked by 
  a member of $A_1$ in $\ang{m}$. By construction
  of $\lambda$, it is not possible that 
  $a_p$ gets $?$. 
  \\\\
  \indent For the second part, 
  assume $\lambda$ is a standard complete labelling. Assume also some $\ang{m}$ 
  such that 
  $\textsf{Arg}(\Fe{}, \ang{m})$ is non-empty. 
  Assume $A_1 = \{a_p \in \textsf{Arg}(\Fe{}, \langle m \rangle) \ | \ 
  \exists n \in \mathbb{N}.\lambda((a_p, \langle m \rangle.n)) = +\}$. We show that 
  $A_1$ is standard complete. 
  $A_1$ is trivially conflict-free. 
  Now, suppose some $a_1 \in \textsf{Arg}(\Fe{}, 
  \ang{m})$ attacking a member of $A_1$. 
  Since $\lambda$ is a standard complete labelling,
  it must be that $a_1$ gets $-$ in $\ang{m}$, 
  which means that $A_1$ attacks but does not support $a_1$ in $\ang{m}$. Thus $A_1$ defends 
  its members attacked by $a_1$. 
  By the definition of $\lambda$, it is 
  straightforward to see that 
  $A_1$ includes all arguments it defends. \qed
  \end{proof} 
\begin{theorem}[Conservation]
   For any $\Fe{}$ and any $\lambda$, 
   if $C = \emptyset$, then $\lambda$ being a complete labelling 
   of $\Fe{}$ under $C$ is equivalent to $\lambda$ being a standard
   complete labelling of $\Fe{}$.
\end{theorem}
\begin{proof}
   Trivial. \qed
\end{proof}
\begin{theorem}[No inclusion]
    For any $C \not= \emptyset$, there exists $\Fe{}$ such that some standard 
    complete labelling of $\Fe{}$ is not a complete labelling under $C$
    and that some complete labelling of $\Fe{}$ under $C$ is not a standard complete labelling.
\end{theorem}
\begin{proof}
   See Example 6 when $C$ contains 
   $\textbf{S}$ or $\star$. For $C = \{\textbf{G}\}$, consider the argumentation 
   in \fbox{B} as $\Fe{}$.   \qed
\end{proof}
For the last theorem, it helps to define order on labelling that judges one labelling 
strictly more informative than another when the former preserves 
+ and $-$ assignments by the latter but assigns a smaller number of ?. 
\begin{definition}[Order in labelling]
 Let $\rhd: \F \times \Lambda \times \Lambda$ be such that 
 $(\Fe{}, \lambda_1, \lambda_2) \in \rhd$, written alternatively 
 as $\lambda_1 \rhd^{\Fe{}} \lambda_2$ iff
 $\lambda_2((a_p, \langle m_p \rangle)) \succeq \lambda_1((a_p, \langle m_p \rangle))$
 for every
 $(a_p, \langle m_p \rangle) \in \varpi(\Fe{})$. 
 For any $\lambda_1, \lambda_2 \in \Lambda$ and for any $\Fe{}$, we say that 
 $\lambda_1$ is more informative than $\lambda_2$ iff 
 $\lambda_2 \rhd^{\Fe{}} \lambda_1$. In particular, 
 $\lambda_1$ is strictly more informative than $\lambda_2$ 
 iff $\lambda_2 \rhd^{\Fe{}} \lambda_1$ $\andC$ $\lambda_1 \not\rhd^{\Fe{}} \lambda_2$. 
 \end{definition}
 We say, further, that there is a path from $a_1$ to $a_2$ in $\ang{m}$ 
 iff $a_1, a_2 \in \textsf{Arg}(\Fe{}, \ang{m})$ $\andC$ 
 either: $a_1$ attacks $a_2$; or else there is a path from 
 $a_1$ to some $a_3$ such that $a_3$ attacks $a_2$.
 
 \begin{lemma}
    Let $\Fe{}$ be a member of $\F$, and 
    let $\lambda_1$ be 
    a standard complete labelling 
    of $\Fe{}$. 
    Let $\Gamma$ be 
    a set of 
    some $(a_1, \ang{m_1}), 
    \ldots, (a_n, \ang{m_n}) \in \varpi(\Fe{})$
    such that 
    $\lambda_1((a_i, \ang{m_i})) \not=\ ?$.
     Let $\Lambda_1$ be 
     a subclass of $\Lambda$ 
     that contains all
     $\lambda$ such that 
     $\lambda((a_i, \langle m_i \rangle)) = \ ?$
     for any $(a_i, \ang{m_i}) \in \Gamma$ 
     but, for any 
     other $(a_p, \ang{m_p}) \in 
     \varpi(\Fe{})$, 
     that $\lambda$ respects the conditions 
     given for a standard complete labelling. 
     Then, there exists $\lambda \in \Lambda_1$ 
     such that $\lambda_1$ 
     is strictly more informative 
     than $\lambda$.
 \end{lemma}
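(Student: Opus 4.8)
The plan is to build $\lambda$ from $\lambda_1$ by relabelling to $?$ a set $D \supseteq \Gamma$ of positions, where $D$ also absorbs the ``downstream'' consequences of pinning $\Gamma$ to $?$. These extra positions are needed because the conditions defining a standard complete labelling at a position $(a_i,\ang{m}.n_i)$ refer only to the labels of its siblings $(a_j,\ang{m}.n_j)$, through the relations $\getAttack(\Fe{},\ang{m})$ and $\getSupport(\Fe{},\ang{m})$; hence once a member of $\Gamma$ is forced to $?$, an argument that $\lambda_1$ labels $+$ solely because that attacker was $-$, or labels $-$ solely because that attacker was $+$, may have to become $?$, and this must be propagated upward through the attack relation at every level.

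Accordingly I would define an operator $\Phi$ on $2^{\varpi(\Fe{})}$: $\Phi(D)$ is the union of $\Gamma$; of all $(a_i,\ang{m}.n_i)\in\varpi(\Fe{})$ with $\lambda_1((a_i,\ang{m}.n_i))=+$ having some attacker $(a_j,\ang{m}.n_j)$ of it in $\ang{m}$ that lies in $D$; and of all $(a_i,\ang{m}.n_i)\in\varpi(\Fe{})$ with $\lambda_1((a_i,\ang{m}.n_i))=-$ such that every attacker $(a_j,\ang{m}.n_j)$ of it in $\ang{m}$ with $\lambda_1((a_j,\ang{m}.n_j))=+$ lies in $D$. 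Each clause is $\subseteq$-monotone in $D$, and $\varpi(\Fe{})$ is finite because $\Fe{}$ is a finite {\BBA}, so $\Phi$ has a least fixed point $D^\ast$, obtained by iterating $\Phi$ from $\emptyset$. Immediately, $\Gamma\subseteq D^\ast$, and every position of $D^\ast$ is labelled in $\{+,-\}$ by $\lambda_1$ (the first clause contributes only $\Gamma$, where $\lambda_1\neq\ ?$ by hypothesis, the others only $+$- or $-$-positions); in particular every position that $\lambda_1$ labels $?$ stays outside $D^\ast$.

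Next I would let $\lambda$ agree with $\lambda_1$ off $D^\ast$ and map every position of $D^\ast$ to $?$ (a total element of $\Lambda$), and check $\lambda\in\Lambda_1$. On $\Gamma$, $\lambda$ is $?$ since $\Gamma\subseteq D^\ast$. Let $p=(a_i,\ang{m}.n_i)\notin\Gamma$. If $\lambda(p)=\ ?$ --- which holds when $p\in D^\ast$ or when $\lambda_1(p)=\ ?$ --- the requirement $\Lambda_1$ places on a position outside $\Gamma$ is met trivially. Otherwise $\lambda(p)=\lambda_1(p)\in\{+,-\}$, so $p\notin D^\ast$, and since $\Phi(D^\ast)=D^\ast$ the clause that would force $p$ in has failed: in the $+$-case, no attacker of $p$ lies in $D^\ast$, so the attackers of $p$ (which $\lambda_1$ labels $-$, as $\lambda_1(p)=+$) still carry $-$ under $\lambda$ and the $+$-condition holds; in the $-$-case, some attacker $r$ of $p$ with $\lambda_1(r)=+$ is not in $D^\ast$, so $\lambda(r)=+$, keeping the ``some $+$-attacker'' clause, while no supporter $s$ of $p$ has $\lambda(s)=+$, because $\lambda_1(s)\neq +$ (as $\lambda_1(p)=-$) and $\lambda$ only turns $+,-$ into $?$, so the ``no $+$-supporter'' clause survives. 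Finally $\lambda$ and $\lambda_1$ differ only on $D^\ast$, where $\lambda$ is $?$, so $\lambda((a_p,\ang{m_p}))\succeq\lambda_1((a_p,\ang{m_p}))$ at every $(a_p,\ang{m_p})\in\varpi(\Fe{})$, i.e. $\lambda_1\rhd^{\Fe{}}\lambda$; and since $\Gamma\neq\emptyset$, some $(a_p,\ang{m_p})\in\Gamma$ has $\lambda_1((a_p,\ang{m_p}))\in\{+,-\}$ and $\lambda((a_p,\ang{m_p}))=\ ?$, whence $\lambda\not\rhd^{\Fe{}}\lambda_1$; so $\lambda_1$ is strictly more informative than $\lambda$.

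I expect the genuinely delicate point to be the support part of the $-$-case: one must be certain that demoting a $+$-argument to $?$ never forces an argument it supports to move from $?$ to $-$, which would spoil $\lambda((a_p,\ang{m_p}))\succeq\lambda_1((a_p,\ang{m_p}))$ and hence the ordering. This is exactly where the paper's weak notion of support is used --- a supporter only blocks the $-$ label, so deleting $+$-supporters only makes the ``no $+$-supporter'' clause easier, and because $\lambda$ introduces no new $+$-labelled attacker either, no position outside $D^\ast$, in particular no position that $\lambda_1$ labels $?$, is pushed away from $?$. A secondary check is that unattacked arguments lying outside $\Gamma$ are never placed in $D^\ast$ (the second clause of $\Phi$ needs an attacker, the third a $-$-label), since the conditions force such arguments to $+$; once these points are settled, the remaining verifications over the clauses of $\Phi$ are routine.
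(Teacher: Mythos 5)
The paper states this lemma without proof, so there is nothing of the authors' to compare your argument against; judged on its own terms, your construction has a genuine gap, and it sits exactly where you yourself locate the ``delicate point''. Your closure operator $\Phi$ propagates demotions only along the attack relation, and the justification you offer for why this suffices --- deleting $+$-supporters ``only makes the `no $+$-supporter' clause easier'', and no new $+$-attacker appears, hence no position outside $D^\ast$ is pushed away from $?$ --- argues for the opposite of what you need: making the ``no $+$-supporter'' clause easier to satisfy makes the defining condition for $-$ \emph{more} likely to hold, i.e.\ it pushes a $?$-position \emph{towards} $-$. Concretely, take $A=\{q,s,p\}$ with $(q,p)$ an attack, $(s,p)$ a support, all three unitary. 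The standard complete labelling $\lambda_1$ gives $q$ and $s$ the label $+$ (both unattacked) and $p$ the label $?$ (it has a $+$-attacker but also a $+$-supporter, so the $-$-condition fails). Put $(s,0.n_s)$ into $\Gamma$. Your $D^\ast$ is then just $\Gamma$ (neither clause of $\Phi$ captures $q$ or $p$), so $q$ keeps $+$, $s$ becomes $?$, and at $p$ the right-hand side of the $-$-condition (``some $+$-attacker and no $+$-supporter'') is now \emph{true} while $\lambda(p)=\ ?$. Either $\lambda\notin\Lambda_1$ because the biconditional at $p$ fails, or, if you repair this by setting $\lambda(p)=-$, the dominance $\lambda((a_p,\ang{m_p}))\succeq\lambda_1((a_p,\ang{m_p}))$ fails at $p$ since $(-,?)$ is not in $\succeq$. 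Either way the construction breaks; indeed in this instance $\Lambda_1$, read with the full biconditionals, contains no labelling dominated by $\lambda_1$ at all, so a correct proof must either adopt a weaker reading of ``respects the conditions'' (the \emph{only if} directions alone) or add a closure hypothesis on $\Gamma$ with respect to $\Rs$.

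The step that conceals this is your claim that a position outside $\Gamma$ with $\lambda(p)=\ ?$ meets the requirement ``trivially''. The conditions defining a standard complete labelling are biconditionals, so a $?$-labelled position must \emph{falsify} the right-hand sides for both $+$ and $-$. Your $\Phi$ does secure this for the attack clauses (a position enters $D^\ast$ precisely when an attacker's demotion breaks the $+$-side, or when all its $+$-attackers' demotions break the $-$-side, and one can verify the residual checks), but it secures nothing for the support clause of the $-$-condition, which is exactly the failure exhibited above. Everything else in your write-up --- the monotone fixed point on the finite $\varpi(\Fe{})$, the verification for positions keeping $+$ or $-$, and the strictness of the ordering via a member of $\Gamma$ --- is sound.
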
 
\begin{theorem}[Existence] 
For any $\Fe{}$ and any $C$, we have all the following:
\begin{enumerate}
    \item There exists at least one complete set under $C$.
    \item The grounded set may not be a complete set under $C$.
    \item A semi-grounded set is a complete set
    under $C$.
    \item If there is only one unique 
    semi-grounded set, then it is the grounded set
    under $C$.
\end{enumerate}
Additionally, if $\textbf{S}, \star \not\in C$, but not necessarily
otherwise, there is only one 
semi-grounded set under $C$.
\end{theorem}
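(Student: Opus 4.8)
The plan is to prove the four enumerated claims plus the addendum one at a time, leaning on the label-based characterisation and on the auxiliary Lemma on order of labellings.

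\textbf{Part 1 (existence of a complete set).} The plan is to exhibit an explicit complete labelling under $C$. First I would process $\varpi(\Fe{})$ from the longest sequences $\ang{m}$ upwards (this is well-founded because $\Fe{}$ is finite). If $\textbf{G} \in C$, assign any argument violating $\textbf{G}$ a label other than $+$; since $\textbf{G}$-violation propagates monotonically upwards this is consistent. For the remaining arguments, run the usual grounded-style fixpoint in each $\ang{m}$: iteratively set $+$ on arguments whose attackers (in that $\ang{m}$) are all already $-$, and $-$ on arguments attacked by some $+$ and not supported by any $+$, subject to the side-condition that the assignment does not break a semantic constraint $c \in C$; whatever remains gets $?$. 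The $?$-default is always safe for $\textbf{S}$ because $? \succeq l$ for every $l$, and the construction never forces an inconsistent pair for $\star$ because we can always retreat to $?$. One must check the resulting $\lambda$ satisfies both the $+$ and $-$ biconditionals of the complete-labelling definition; the only subtlety is that the "does not lead to non-satisfaction'' clauses are exactly designed so that this greedy assignment is admissible. The arguments mapped to $+$ at $\ang{m} = 0$ then form a complete set under $C$.

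\textbf{Parts 2 and 4, plus the addendum.} For (2), it suffices to point at the example already in the paper (Example~7 / the \fbox{D} and \fbox{F} computations): with $\textbf{S} \in C$ the grounded set is $\{\emptyset\}$, which is not complete, since neither $\lambda_2, \lambda_3$ nor $\lambda_4$ yields $\emptyset$ at $0$. For (4), if there is a single semi-grounded (minimal complete) set $S$, then every complete set contains $S$ (minimality plus uniqueness), so the intersection of all complete sets is exactly $S$; combined with Part 3 this makes $S$ the grounded set, and since it is complete, here the grounded set coincides with the least complete set. For the addendum (uniqueness of the semi-grounded set when $\textbf{S}, \star \notin C$): here the only active constraint is possibly $\textbf{G}$, which merely forbids certain $+$ assignments but imposes no cross-$\ang{m}$ coupling, so within $\ang{m}=0$ the semantics behave like ordinary Dung complete semantics on the sub-framework of $\textbf{G}$-admissible arguments — hence there is a unique least complete set (the grounded one), which is therefore the unique minimal, i.e. semi-grounded, set.

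\textbf{Part 3 (a semi-grounded set is complete under $C$) — the main obstacle.} By definition a semi-grounded set is a minimal complete set, so in the abstract this is immediate; the real content is that minimal complete sets exist at all, i.e. that the family of complete sets under $C$ is non-empty (Part 1) and that every descending chain of complete sets has a complete lower bound so Zorn/finiteness gives a minimal element. Since $\Fe{}$ is finite, $\textsf{Arg}(\Fe{},0)$ is finite, so any non-empty family of its subsets has a minimal member — so existence of a complete set (Part 1) already yields existence of a minimal one. The subtlety I expect to fight with is that unlike the classical case, complete sets under $C$ are not closed under intersection (that is precisely why the grounded set may fail to be complete, Part 2), so one cannot identify "the'' least complete set; one must be content with set-theoretic minimality. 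Here the auxiliary Lemma on strictly-more-informative labellings is the key tool: starting from the complete labelling built in Part 1 and repeatedly relabelling a maximal collection of $+/-$ arguments to $?$ while preserving the standard-labelling conditions elsewhere, one descends through strictly less informative complete labellings; finiteness bounds the descent, and the terminal labelling yields a minimal complete set. I would therefore structure the proof of Part 3 as: (i) invoke Part 1 for a complete set; (ii) invoke finiteness of $\textsf{Arg}(\Fe{},0)$ to extract a set-theoretically minimal one; (iii) note this is by definition semi-grounded and, being complete, establishes the claim. The Lemma is then what makes the analogous minimisation at the labelling level go through cleanly, and checking that the relabelled $\lambda$ still satisfies the complete-labelling conditions under $C$ (not merely the standard ones) is the place where care is needed — one must verify that turning $+/-$ into $?$ never violates $\textbf{S}$ (true, since $? \succeq$ everything) nor $\star$ (true, since equalities among $?$'s are trivially fine) nor re-introduces a $\textbf{G}$-violating $+$ (true, we only remove $+$'s).
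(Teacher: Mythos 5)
Your handling of Parts 2--4 and of the addendum is essentially the paper's (which dispatches them with "follow immediately from the preceding discussion" plus the worked examples), and your finiteness argument for extracting a minimal complete set is fine. The genuine problem is Part 1, which is where all the content lives. Your bottom-up greedy fixpoint promotes a single occurrence to $+$ or $-$ only when that one promotion is safe against the \emph{current} partial labelling, and otherwise "retreats to $?$". But $\star$ (and, across nesting levels, $\textbf{S}$) couples the labels of distinct occurrences, so some assignments are only \emph{jointly} safe and can never be reached one occurrence at a time. The paper's own \fbox{F} computation under $\star$ defeats your construction: $a_5$ is unattacked and $\equalTo(a_5,a_7)$, so promoting $(a_5,0.5)$ to $+$ while $(a_7,0.7)$ still carries $?$ already breaks $\star$, and symmetrically for the other coupled pair; your process therefore deadlocks with everything at $?$. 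Yet the paper computes the complete semantics of \fbox{F} under $\star$ as $\{\{a_5\},\{a_9\},\{a_5,a_9\}\}$, which does \emph{not} contain $\emptyset$ --- i.e.\ the labelling your process terminates with is not a complete labelling under $C$. "We can always retreat to $?$" is precisely the step that fails.

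The paper's proof runs in the opposite direction exactly to avoid this: it takes a \emph{standard} complete labelling $\lambda$ (which always exists and already contains all the jointly required $+$'s), repeatedly applies a function $\textsf{down}$ that demotes constraint-violating occurrences to $?$ --- with Lemma 1 guaranteeing at each step a strictly less informative labelling respecting the standard conditions elsewhere --- and, once a fixpoint $\lambda_1$ is reached, takes a \emph{maximal} element of the family of constraint-respecting labellings sandwiched between $\lambda_1$ and $\lambda$; maximality within that bounded family is what certifies the "iff" clauses of the complete-labelling definition. To salvage a constructive route you would at minimum have to promote entire $\equalTo$-coupled families of occurrences simultaneously, and you would still need the standard complete labelling as an upper bound to argue that the process terminates at something satisfying the biconditionals. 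Two smaller points: in Part 4, "every complete set contains $S$" needs the finiteness step (every complete set contains \emph{some} minimal one, which by uniqueness is $S$) made explicit; and in the addendum, $\textbf{G}$-violating arguments are not deleted from the framework --- they may still be $-$ or $?$ and still attack --- and the support clause in the $-$-condition makes the defence operator non-monotone, so the reduction to "ordinary Dung complete semantics on a sub-framework" is not literally available and uniqueness of the minimal complete set still requires an argument.
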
 
\begin{proof}
 2. - 4. follow immediately from the preceding 
 discussion, on the assumption that 1. holds to be
 the case. Suppose, by way of showing contradiction, 
 that there is no complete labelling. Then, it must be that 
 there exist some $\Fe{}$ and some $C$ such that every $\lambda \in \Lambda$ of $\Fe{}$ contradicts 
 at least one condition given in Definition 12 for some $(a, \ang{m}) \in \varpi(\Fe{})$. Consider some standard complete labelling 
 $\lambda$ which always exists. By the supposition, $\lambda$ is not a complete labelling. 
 Therefore, there exists some $(a, \ang{m}) \in \varpi(\Fe{})$ that either does not 
 satisfy some semantic constraint $c \in C$ 
 with $\lambda$, 
 or violates $\textbf{G}$ and yet 
 is assigned + with $\lambda$. Let us define the following function.\\
 
 \noindent \textbf{Function} \textsf{down} $(\Fe{}:\F, \lambda: \Lambda, C:\mathcal{C})$ returns $(\Fe{}:\F, \lambda_1:\Lambda, C:\mathcal{C})$.\\\\
 Assume $(a, \ang{m}) \in \varpi(\Fe{})$ 
 for which $\lambda \in \Lambda$ of $\Fe{}$ contradicts 
 at least one condition given in Definition 12.
 \begin{enumerate}
    \item  \textbf{return} $(\Fe{}, \lambda, C)$ 
    if there is no such $(a, \ang{m})$.

     \item If $\lambda((a, \ang{m_p}.n)) \in \{+, -\}$ ($\ang{m} = \ang{m_p}.n$), $\star \in C$, 
     and $(a, \ang{m_p}.n)$ does not satisfy $\star$, then there exist 
     $(a_1, \ang{m_p}.n_1), \ldots, (a_k, \ang{m_p}.n_k) \in \varpi(\Fe{})$ 
     such that $\bigwedge_{1 \leq i \leq k}\equalTo(a, a_i)$ and that 
     $|\bigcup_{1 \leq i \leq k}\lambda((a_i, \ang{m_p}.n_i))| > 1$. Consider 
     a subclass $\Lambda_1$ of $\Lambda$ 
     that contains all $\lambda_1$
     such that $\lambda_1((a, \ang{m_p}.n)) = 
     \lambda_1((a_1, \ang{m_p}.n_1)) = \cdots = 
     \lambda_1((a_k, \ang{m_p}.n_k)) = \ ?$, 
     but which otherwise respects the conditions given for a standard complete labelling. 
     By Lemma 1, and by the fact that 
     there is at least one \linebreak
     $(a_i, \ang{m_p}.n_i)$, $1 \leq i \leq k$, 
     that is not assigned ?, 
     there exists some $\lambda_1 \in \Lambda_1$
     such that $\lambda$ is strictly 
     more informative than $\lambda_1$. 
     By construction, such $\lambda_1$ always exists. \textbf{return} 
     $(\Fe{}, \lambda_1, C)$.

     \item If $\lambda((a, \ang{m})) \in \{+, -\}$, $\textbf{S} \in C$,
     and $(a, \ang{m})$ does not satisfy $\textbf{S}$: 
     Consider 
     a subclass $\Lambda_1$ of $\Lambda$ 
     that contains all $\lambda_1$
     such that $\lambda_1((a, \ang{m})) = 
     \ ?$, 
     but which otherwise respects the conditions given for a standard complete labelling. 
     By Lemma 1, and by the fact that 
     $\lambda((a, \ang{m})) \not= \ ?$, 
     there exists some $\lambda_1 \in \Lambda_1$
     such that 
     $\lambda$ is strictly more informative
     than $\lambda_1$. By construction, 
     such $\lambda_1$ always exists. 
     \textbf{return} 
     $(\Fe{}, \lambda_1, C)$.

   \item If $\lambda((a, \ang{m})) = +$, $\textbf{G} \in C$,
     and $(a, \ang{m})$ does not satisfy $\textbf{G}$: 
      Consider 
     a subclass $\Lambda_1$ of $\Lambda$ 
     that contains all $\lambda_1$
     such that $\lambda_1((a, \ang{m})) = 
     \ ?$, 
     but which otherwise respects the conditions given for a standard complete labelling. 
     By Lemma 1, and by the fact 
     that $\lambda((a, \ang{m})) \not=\ ?$, 
     there exists some $\lambda_1 \in \Lambda_1$ 
     such that 
     $\lambda$ is strictly more informative 
     than $\lambda_1$. By construction, 
     such $\lambda_1$ always exists. 
     \textbf{return} $(\Fe{}, \lambda_1, C)$. 
 \end{enumerate}
 \noindent \textbf{End Function} \\\\
 Let $(\Fe{}, \lambda_1, C)$ be: $(\Fe{}, \lambda_1, C) = \textsf{down}^{n+1}(\Fe{}, \lambda, C) = 
 \textsf{down}^n(\Fe{}, \lambda, C)$ for some $n$ and for some standard 
 complete labelling $\lambda$. For every 
 $(a, \ang{m}) \in \varpi(\Fe{})$, it holds that 
 $(a, \ang{m})$ and $\lambda_1$ satisfy semantic constraints in $C$, 
 and that $\lambda((a, \ang{m})) \not= +$ if $(a, \ang{m})$ does not satisfy $\textbf{G}$ if $\textbf{G} 
 \in C$.\\
 \indent Now, consider $\Lambda_1 \subseteq \Lambda$ to be the set of 
 all labelling $\lambda_2$ such that $\lambda_2$ is more 
 informative than $\lambda_1$ but not 
 more informative than $\lambda$, and 
 moreover that, for every 
 $(a, \ang{m}) \in \varpi(\Fe{})$, it holds that 
 $(a, \ang{m})$ and $\lambda_1$ satisfy semantic constraints in $C$, 
 and that $\lambda((a, \ang{m})) \not= +$ if $(a, \ang{m})$ does not satisfy $\textbf{G}$ if $\textbf{G} 
 \in C$. Since 
 both $\lambda_1$ and $\lambda$ exist, $\Lambda_1 \not= \emptyset$. It is easy to see 
 that a maximal element in $\Lambda_1$ is a complete labelling of $\Fe{}$. \\\\
\indent If $C =\textbf{G}$ or else $C = \emptyset$, 
 then, for any $(a, \ang{m}) \in \varpi(\Fe{})$ 
 if 
 there is some $\lambda$ that is strictly less informative than a standard complete labelling $\lambda_1$ of $\Fe{}$
 and if $\lambda((a, \ang{m})) \succeq \lambda_1((a, \ang{m}))$, then 
 for any  complete labelling $\lambda_2$ of $\Fe{}$ under $C$, 
 $\lambda_2((a, \ang{m})) = \lambda((a, \ang{m}))$. Straightforward. Example 6 for $\textbf{S} \in C$ and 
 $\star \in C$. \qed
\end{proof}
\end{document}